\documentclass{article}

\usepackage{microtype}
\usepackage{graphicx}
\usepackage{subfigure}
\usepackage{booktabs} %

\usepackage{dblfloatfix} %

\usepackage{hyperref}

\usepackage[accepted]{icml2024}

\usepackage{amsmath}
\usepackage{amssymb}
\usepackage{mathtools}
\usepackage{amsthm}

\usepackage[capitalize,noabbrev]{cleveref}

\usepackage[utf8]{inputenc} %
\usepackage[T1]{fontenc}    %
\usepackage{hyperref}       %
\usepackage{url}            %
\usepackage{booktabs}       %
\usepackage{amsfonts}       %
\usepackage{nicefrac}       %
\usepackage{microtype}      %
\usepackage{xcolor}         %
\usepackage{graphicx}
\usepackage{comment}
\usepackage{amsmath}
\usepackage{wrapfig}
\usepackage{svg}
\usepackage[algoruled,noend]{algorithm2e} %
\usepackage{adjustbox}
\usepackage{placeins}

\usepackage{float}
\usepackage{lipsum}

\usepackage{rotating}
\usepackage{graphicx}

\usepackage{tabularx}
\usepackage{makecell} %

\theoremstyle{plain}

\usepackage{xcite}

\usepackage{xr}
\makeatletter
\newcommand*{\addFileDependency}[1]{%
  \typeout{(#1)}%
  \@addtofilelist{#1}%
  \IfFileExists{#1}{}{\typeout{No file #1.}}%
}
\makeatother

\DeclareMathOperator*{\argmax}{argmax}
\DeclareMathOperator*{\E}{\mathbb{E}}
\DeclareMathOperator*{\Cov}{\text{Cov}}
\DeclareMathOperator*{\MoGr}{\text{MoGr}}

\theoremstyle{plain}
\newtheorem{theorem}{Theorem}[section]
\newtheorem{proposition}[theorem]{Proposition}

\theoremstyle{definition}

\theoremstyle{remark}

\usepackage[textsize=tiny]{todonotes}

\icmltitlerunning{SBMI: Simulation-Based Model Inference}

\begin{document}

\twocolumn[
\icmltitle{Simultaneous Identification of Models and Parameters of Scientific Simulators}

\icmlsetsymbol{equal}{*}

\begin{icmlauthorlist}
\icmlauthor{Cornelius Schr\"oder}{AIC}
\icmlauthor{Jakob H. Macke}{AIC,MP}

\end{icmlauthorlist}

\icmlaffiliation{AIC}{Machine Learning in Science, University of Tübingen and Tübingen AI Center, Germany}
\icmlaffiliation{MP}{Max Planck Institute for Intelligent Systems, Department Empirical Inference, Tübingen, Germany}
\icmlcorrespondingauthor{Cornelius Schr\"oder, Jakob Macke}{firstname.lastname@uni-tuebingen.de}

\icmlkeywords{Machine Learning, ICML, Simulation-based Inference, Model Learning}

\vskip 0.2in
]

\printAffiliationsAndNotice{}

\begin{abstract}
Many scientific models are composed of multiple discrete components, and scientists often make heuristic decisions about which components to include.
Bayesian inference provides a mathematical framework for systematically selecting model components, but defining prior distributions over model components and developing associated inference schemes has been challenging.
We approach this problem in a simulation-based inference framework: We define model priors over candidate components and, from model simulations, train neural networks to infer joint probability distributions over both model components and associated parameters. Our method, simulation-based model inference (SBMI), represents distributions over model components as a conditional mixture of multivariate binary distributions in the Grassmann formalism. SBMI can be applied to any compositional stochastic simulator without requiring likelihood evaluations. We evaluate SBMI on a simple time series model and on two scientific models from neuroscience, and show that it can discover multiple data-consistent model configurations, and that it reveals non-identifiable model components and parameters. 
SBMI provides a powerful tool for data-driven scientific inquiry which will allow scientists to identify essential model components and make uncertainty-informed modelling decisions.
\end{abstract}

\section{Introduction}

Computational models are a powerful tool to condense scientific knowledge into mathematical equations. These models can be used for interpreting and explaining empirically observed phenomena and predicting future observations. 
Scientific progress has always been driven by competing models, dating back to disputes about the heliocentric system \citep{copernicus1543revolutionibus}. However, newly developed models are rarely that disruptive; instead, they are often created by combining existing components into larger models. %
For example, the original SIR model \citep{kermack1927SIR} describes the dynamics of infectious diseases by three population classes (\textbf{s}usceptible, \textbf{i}nfective, \textbf{r}ecovered), but was later expanded to  include further epidemiological classes \citep[e.g., temporary immune groups, ][]{hethcote2000SIR}. 
Similar modularity can be found, for example,  in computational neuroscience models: The original Hodgkin-Huxley model \citep{hodgkin1952quantitative} for the dynamics of action potentials 
consisted of only two voltage-gated ion channels ($K^+$, $Na^+$), but more recent models \citep{mccormick1992model,pospischil2008minimal} are based on compositions of a myriad of different channels \citep{podlaski2017mapping}.
Similarly, there exist many variants of drift-diffusion models (DDM) \citep{ratcliff1978theory} in cognitive neuroscience:  All of them follow the basic concept of modeling the decision process by a particle following a stochastic differential equation and eventually hitting a decision-boundary. There are many possible choices of noise models, drift dependencies, and boundary conditions. This rich model class and many of the different components have been extensively studied on a wide range of experimental measurements \citep{,ratcliff2008diffusion_review,
latimer2015single, turner2015informing}.

\begin{figure*}[t]
  \centering
  \includegraphics[width=0.9\textwidth]{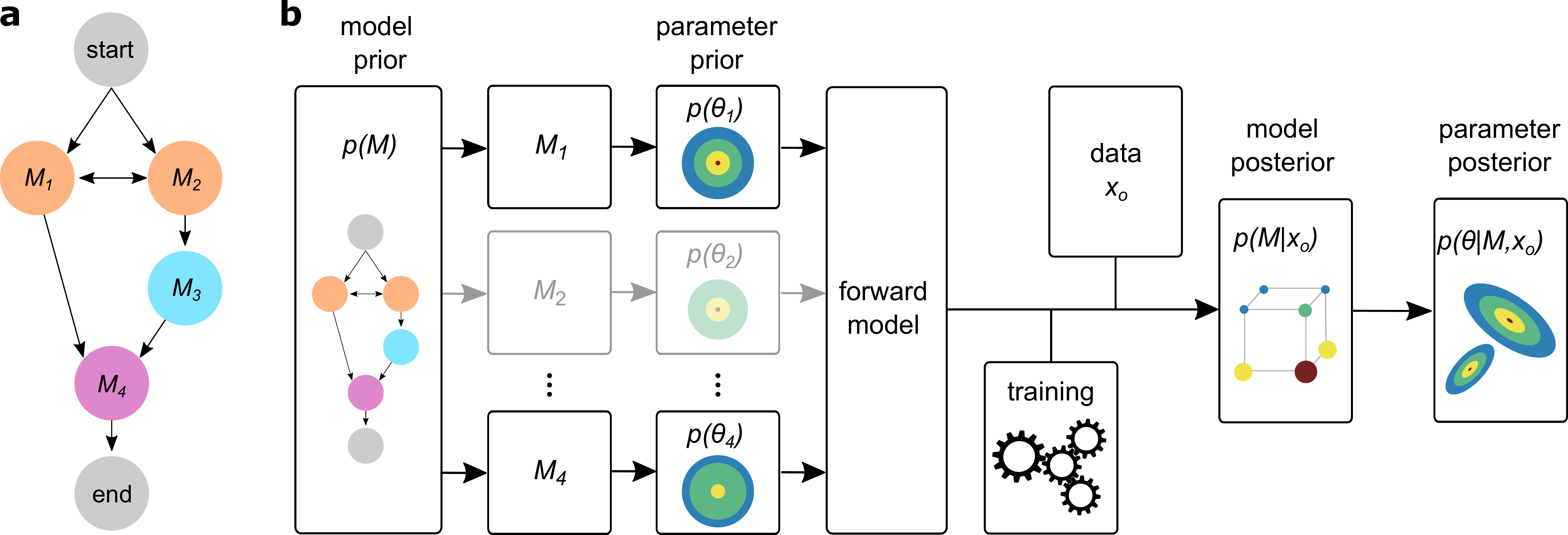}
  \caption{
    \textbf{Simulation-based model inference (SBMI) scheme. (a)} The model prior  $p(M)$ is given implicitly by a graph. A random walk from the \emph{start} to the \emph{end} node corresponds to a draw from this prior. 
    \textbf{(b)} We first sample from the model prior and the corresponding parameter priors $p(\theta_i)$ to compile a forward model. Following this sampling procedure, we generate training data with which we can learn a approximation of the joint posterior $p(M,\theta|x)$ by factorizing the posterior into $p(M|x) p(\theta|M,x)$. Finally we can evaluate this posterior for some observed data $x_o$. 
    \vspace{-12pt}
    }
    \label{fig:inference_scheme}
\end{figure*}

How can one automatically infer such models from data, including \emph{both} the compositions of components and the associated parameters? 
One challenge is posed by the fact that, for many such models, evaluating the likelihood function is not tractable, rendering standard likelihood-based approaches  inapplicable.  
Approximate Bayesian computation (ABC) \citep{sisson2018handbook}, offers a framework to deal with this challenge in a systematic way, and in the last years, the development of new methods has been fueled by advances in neural network-based density estimation \citep{papamakarios2021normalizing} leading to new simulation-based inference (SBI) methods \citep{papamakarios2016fast,lueckmann2017flexible,  cranmer2020frontier}. 
SBI has been successfully applied to various fields like astronomy \citep{dax2022group}, robotics \citep{marlier2021simulationrobotics}, neuroscience \citep{gonccalves2020training,deistler2022energy,groschner2022biophysical} and cognitive science  \cite{radev2020bayesflow,boelts2022flexible}.

However, in addition to inferring parameters, we also need to be able to compare and select models comprised of different components to select between competing theories.  Standard methods for Bayesian model comparison (or selection) rely on the \emph{Bayes factor} \citep{kass1995bayes}, i.e., the ratio of model evidence for two different models $M_1$ and $M_2$: $B_{12}:={p(x_o|M_1)}/{p(x_o|M_2)}.$ %
Multiple approaches have been developed for estimating Bayes factors, most of which are  based on (rejection) sampling \citep{trotta2008bayessky} and are computationally expensive. 
Alternative approaches include approximating the model evidence by applying harmonic mean estimators to likelihood emulators  \citep{mancini2022bayesian}, or by directly targeting the model posteriors in an amortized manner \cite{boelts2019comparing,radev2021amortizedmodelcomparison}. 
While these methods infer the model evidence separately for each model or assume a fixed set of models to compare, our approach allows for a comparison of flexible combinations of model components in a fully amortized manner.

\emph{Symbolic regression} approaches  aim to learn interpretable mathematical equations from observations--- while this might seem like a conceptually very different problem, it is methodologically related, as one can also interpret mathematical equations as being composed of different model components.  Inferring symbolic equations from data can be tackled by genetic programming \citep{schmidt2009distilling_eureqa, dubvcakova2011eureqa}, by performing sparse regression over a large set of base expressions \cite{brunton2016discovering, bakarji2022discovering} or by using graph neural networks \citep{cranmer2020discovering}. Alternatively, symbolic regression has been approached by designing neural networks with specific activation functions \citep{martius2016extrapolation,sahoo18a}, optimizing these networks with sparsity priors \citep{werner2021informed} and using Laplace approximations to infer uncertainties over their weights \cite{werner2022uncertainty}.
Building on the success of transformers, \citet{biggio2021neural} introduced a transformer-based 
approach for symbolic regression, which was recently extended to capture differential equations \citep{becker2022discovering}.

Our work builds on these advances in both SBI and symbolic regression. However, our goal is to infer \emph{joint} posterior distributions over a set of different model components, \emph{as well as} over their associated parameters. One can interpret our approach as performing fully probabilistic symbolic regression not on `atomic' symbols, but rather on expression `molecules' which are provided by domain experts and represent different mechanisms that might explain the observed data.
As we will show, accurate inference of joint posteriors is crucial for obtaining interpretable results in the presence of redundant model components: A common situation in scientific applications is that different components are functionally similar \citep[e.g., ion channels with similar dynamics, ][]{podlaski2017mapping}, resulting in explaining-away effects and strongly correlated posterior distributions. Hence, inference methods need to be able to accurately handle such settings to obtain scientifically interpretable results.

We address this challenge by providing a network architecture for joint inference, which includes a flexible representation over model components using mixtures of multivariate binary distributions in the Grassmann formalism \citep{arai2021grassmann}. Second, for such a procedure to be able to provide parsimonious results, the ability to flexibly specify priors over models is crucially important. Our procedure only requires the ability to generate samples from the prior \citep[like ][]{biggio2021neural}, without requiring access to evaluations of prior probabilities. Third, our approach is fully \emph{amortized}: Once the inference network has been trained, approximate posteriors over both model components and parameters can be inferred almost instantly, without any computationally expensive MCMC sampling or post-hoc optimizations at inference-time.

In the following, we first describe our inference method (Sec. \ref{sec:method}) and showcase it on an additive model related to symbolic regression (Sec. \ref{subsec:additive_model}). We then apply it to DDMs and experimental decision-making data (Sec. \ref{subsec:ddm}) as well as to Hodgkin-Huxley models and voltage recordings from the Allen Cell Types database \cite{Allen2016}  (Sec. \ref{subsec:HH}) and show that our method can successfully retrieve interpretable posteriors over models.

\section{Method} \label{sec:method}

\begin{figure}
\centering
  \includegraphics[width=0.4\textwidth]{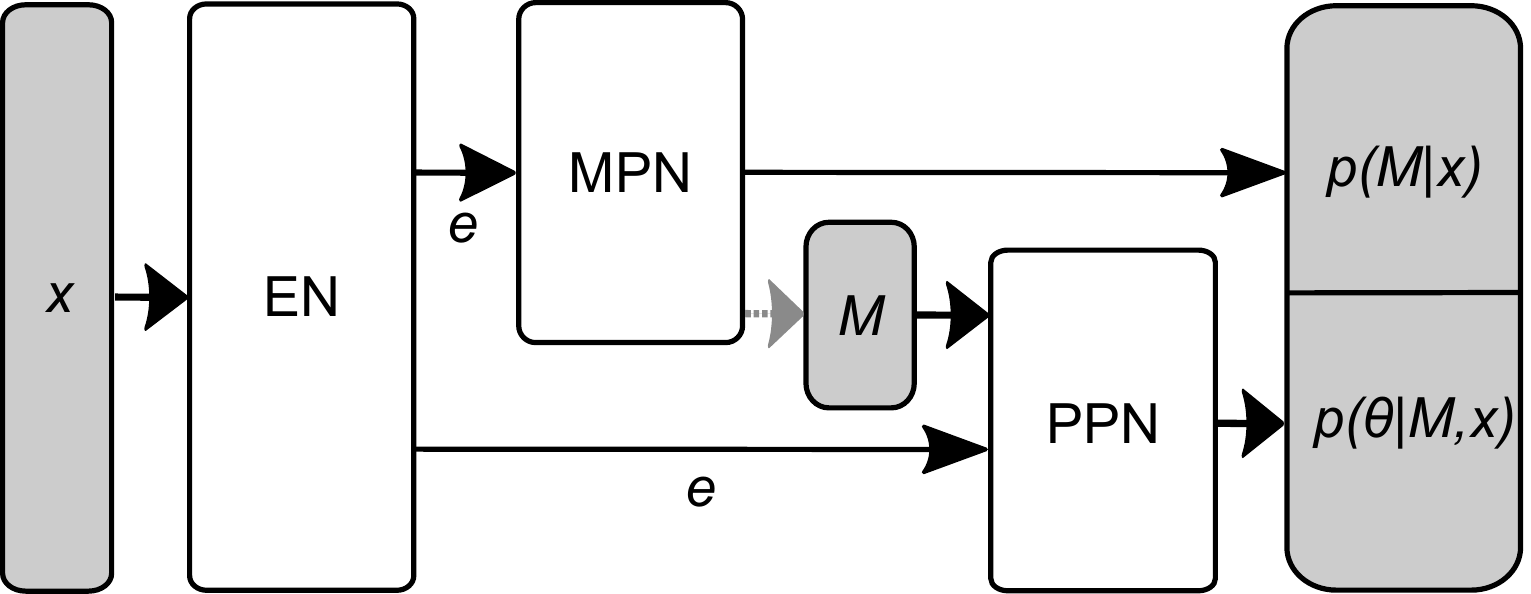}
  \caption{
    \textbf{SBMI network architecture.} Data $x$ is passed through an embedding net (EN). The embedded data $e$ is forwarded to the model posterior network (MPN), which learns posteriors over different model components, and the parameter posterior network (PPN) which learns the posterior distributions over parameters given specific models $M$. Gray boxes correspond to network inputs / outputs. %
    \vspace{-14pt} %
  }
  \label{fig:network_architecture}
\end{figure}

Our proposed method, \emph{simulation-based model-inference} (SBMI), performs inference over a model $M$  consisting of different model components $M_i$ and their associated parameters $\theta_i$. More specifically, we use a neural posterior estimation (NPE) method to approximate a joint posterior distribution $p(M,\theta|x_o) = p(M|x_o) p(\theta|M,x_o)$ given some observed data $x_o$ end-to-end (Fig.~\ref{fig:inference_scheme}). While we take a related approach to previous NPE methods \cite{papamakarios2016fast,lueckmann2017flexible,greenberg2019automatic}, we lift the assumption of a fixed simulator and include the inference over model components into our framework. We therefore assume that we have a `black-box' model from which we can draw samples $x_j \sim p(x | M,\theta)$, but don't necessarily have access to the likelihood, any other internal states, or gradients of the model.  Approximate Bayesian inference is performed by first generating simulations which are then used to learn posterior distributions.  These can be evaluated in an amortized manner for new observations $x_o$ to get the full joint posterior $p(M,\theta|x_o)$.

\subsection{Priors and Data Generation}
To allow maximal flexibility in designing appropriate priors, SBMI only requires access to an implicit prior distribution from which we can sample models. 
We here define the model prior by a directed graph with dynamically changing weights, defined as a triplet $(\mathcal{M},\mathcal{E}, \mathcal{R})$. 
The set of vertices $\mathcal{M}=\{M_i\}_{i \in 0,...,N+1}$ corresponds to the model components and additional start/end node $M_0$ and $M_{N+1}$ (Fig.~\ref{fig:inference_scheme}a). 
The set of edges $\mathcal{E} = \{e_{ij}=(M_i, M_j,w_{ij}) | M_i, M_j \in \mathcal{M}, w_{ij} \in \mathbb{R}_{\geq 0} \}$ are directed, with weights $w_{ij}$. 
To sample from the prior, we perform a random walk on the graph with $p(M_i|M_j) = \frac{w_{ij}}{\sum_l w_{il}}$ and represent each sampled model $M=(M_1,..., M_N)$ as an ordered binary vector of length $N$.
The set of rules $\mathcal{R}=\{R_k|R_k:(\mathcal{E},S) \to \mathcal{E}, ~S \subset \mathcal{M}\}_{k \in K}$, with index set  $K$, defines how the weights $w_{ij}$ are updated during a random walk. We assume the graph to be \emph{conditionally acyclic}: While the initial edges $\mathcal{E}$ can include cycles (e.g. Fig. \ref{fig:additive_model}a), the updating rules $\mathcal{R}$ ensure that no cycle occurs in prior samples. 
By changing the edge weights we can encode prior knowledge, for example, to favour simple models over complex models, or to encourage (or discourage) the co-occurrence of specific model components. Pseudocode for the sampling procedure and example updating rules can be found in Appendix \ref{app:sec:model_prior}.

This graph representation gives us the possibility to flexibly encode prior knowledge of the model by carefully defining its structure and weights with the help of domain expertise.
Once we have sampled a model $M$, we define the prior of the corresponding model parameters as the product of the component-specific priors: $p(\theta|M) = \prod_{i | M_i = 1} p(\theta_i)$, i.e. the parameter vector $\theta$ is of variable size and matches a specific model $M$. The component-specific priors $p(\theta_i)$ can correspond to any continuous, potentially multivariate, distribution. 

To generate training data for learning an approximation of $p(M,\theta|x_o)$ we need a `compiler' that turns the model representation $(M,\theta)$ into a simulator which then generates synthetic data $x$. These compilers and simulators will generally be specific to the model type and based on domain-specific toolboxes. In our numerical experiments, we built a flexible interface to symbolic calculations based on \emph{SymPy} \citep{meurer2017sympy} for the additive model (Sec. \ref{subsec:additive_model}), the \emph{PyDDM} toolbox \citep{shinn2020pyddm} for the drift-diffusion model (Sec.~\ref{subsec:ddm}) and implemented a modularised version of the Hodgkin-Huxley model in \emph{JAX} \citep{jax2018github} (Sec. \ref{subsec:HH}).

\subsection{Inference} \label{subsec:inference}

We want to perform inference over the joint posterior distribution $p(M,\theta|x)$ of the model and its parameters, given some data $x$. We can factorize this distribution $p(M,\theta|x) = p(M|x)p(\theta|M,x)$ and approximate it by jointly learning two coupled network modules (Fig.~\ref{fig:network_architecture}): The first module learns an approximation to the model posterior $q_\psi(M|x) \approx p(M|x)$ and the second one an approximation to the parameter posterior  $q_\phi(\theta|M,x) \approx p(\theta|M,x)$ conditioned on the data and the model.  As the data $x$ might be high-dimensional (or, in principle, of variable length)  we use an additional embedding net to project it to a low-dimensional representation before passing it to the posterior networks. This network can be replaced by `summary statistics' which capture the main features of the data (see Sec. \ref{subsec:HH}).

\paragraph{Model Posterior Network}
To approximate the multivariate binary model posterior $p(M|x)$ we introduce a new family of mixture distributions: mixture of multivariate binary Grassmann distributions (MoGr). Multivariate binary Grassmann distributions were recently defined by \citet{arai2021grassmann}, and allow for analytical probability evaluations. A Grassmann distribution is defined by its probability mass function on a $n$-dimensional binary space, for which closed-form expressions for marginal and conditional distributions are available. This in turn can be directly used for efficient sampling. An $n$-dimensional binary Grassmann distribution $\mathcal{G}$ on $Y = (Y_1,...,Y_n)$ is parameterized by a $n\times n $ matrix $\Sigma$ which is analogous to the covariance of a normal distribution, but not necessarily symmetric. The mean of the marginal distribution is represented on the diagonal and the covariance is the product of the off-diagonal elements: %
\begin{equation*}
    \mathbb{E}[Y_i] = \Sigma_{ii}, \qquad \text{Cov}[Y_i,Y_j] = - \Sigma_{ij} \Sigma_{ji}.
\end{equation*}
For a valid distribution $(\Sigma^{-1}-{I})$ must be a $P_0$-matrix, but has otherwise no further constraints \cite{arai2021grassmann}.
We thus define a mixture of Grassmann distribution as ${\MoGr (Y) =\sum_i \alpha_i \mathcal{G}_i(Y)}$ for a finite partition ${\sum_i \alpha_i = 1}$ and Grassmann distributions $\mathcal{G}_i$. We denote the corresponding conditional distribution by $\MoGr (Y|e) =\sum_i \alpha_i \mathcal{G}_i(Y|e)$, for some real-valued context vector $e$ (which will be the embedded data in our case). Further details (including some key properties, and implementation details) in Appendix \ref{app:grassmann}. 
We trained the model posterior $p(M|x)$ represented as conditional MoGr distribution $\MoGr(M|x) \approx q_\psi(M|x)$ by minimizing the negative log-likelihood. The model loss  $\mathcal{L}_M$ is therefore defined by $\mathcal{L}_M(\psi)  =  - \log q_\psi(M|x)$.

\paragraph{Parameter Posterior Network}
The parameter posterior network  $q_\phi(\theta|M,x)$ needs the flexibility to deal with different dimensionalities, as $\theta$ is only defined when the respective model component ($M_i=1$) that uses $\theta_i$ is included. While recent SBI approaches typically used normalizing flows \citep{papamakarios2021normalizing} for parameter inference, we use a mixture density network (MDN) of Gaussian distributions on the full-dimensional parameter space (with dimension $n=\sum_{i}\text{dim}(\theta_i)$) and marginalize out the non-enclosed model components. This allows the network to learn dependencies across  model components (which is critical, e.g., to account for compensation effects between redundant components).  
We construct this flexible MDN by defining for every $\theta$ its complement $\theta^C$ as the parameter dimensions not present in $\theta$ and $\bar{\theta}=(\theta,\theta^C)$. We further define $\bar{p}$ as the $n$-dimensional distribution acting on $\bar{\theta}$.  We can now define the parameter posterior network $q_\phi(\theta|M,x)$ by  marginalizing out $\theta^C$,
\begin{equation*}
    q_\phi(\theta|M,x) 
    = \int \bar{p}(\bar{\theta}|M,x) d\theta^C .
\end{equation*}
We use the standard NPE loss \citep{papamakarios2016fast} for the parameter posterior network $\mathcal{L}_{\theta}$ : $\mathcal{L}_{\theta}(\phi) = - \log ~q_\phi(\theta|M,x)$. 
The final loss function for the training of the three different network modules (embedding net, model, and parameter posterior network) is then defined as the expected sum of the two posterior losses:
$\mathcal{L}(\psi, \phi) = \frac{1}{L} \sum_{l} \mathcal{L}_{M_l}(\psi) + \mathcal{L}_{\theta_l}(\phi)$, for a batch of training samples $\{(\theta_l,M_l,x_l)\}_l$ of size $L$. 
In Proposition \ref{proof:KL}
we show that optimizing this loss functions minimizes the expected Kullback-Leibler divergence between true joint posterior $p(M,\theta|x)$ and the approximated posterior 
\begin{equation*}
    \mathbb{E}_{p(x)} \big [D_{KL}( p(M,\theta|x)|| q_\phi(M|x)q_\psi(\theta|M,x)\big ],
\end{equation*}
and is therefore retrieving the object of interest.
See Algorithm \ref{alg:SBMI} for pseudocode.
Our implementation is based on the \emph{sbi} toolbox \citep{tejerocantero2020sbi} (see Appendix \ref{app:inference} for details).    

\begin{figure*}[t]
  \centering
  \includegraphics[scale=0.95]{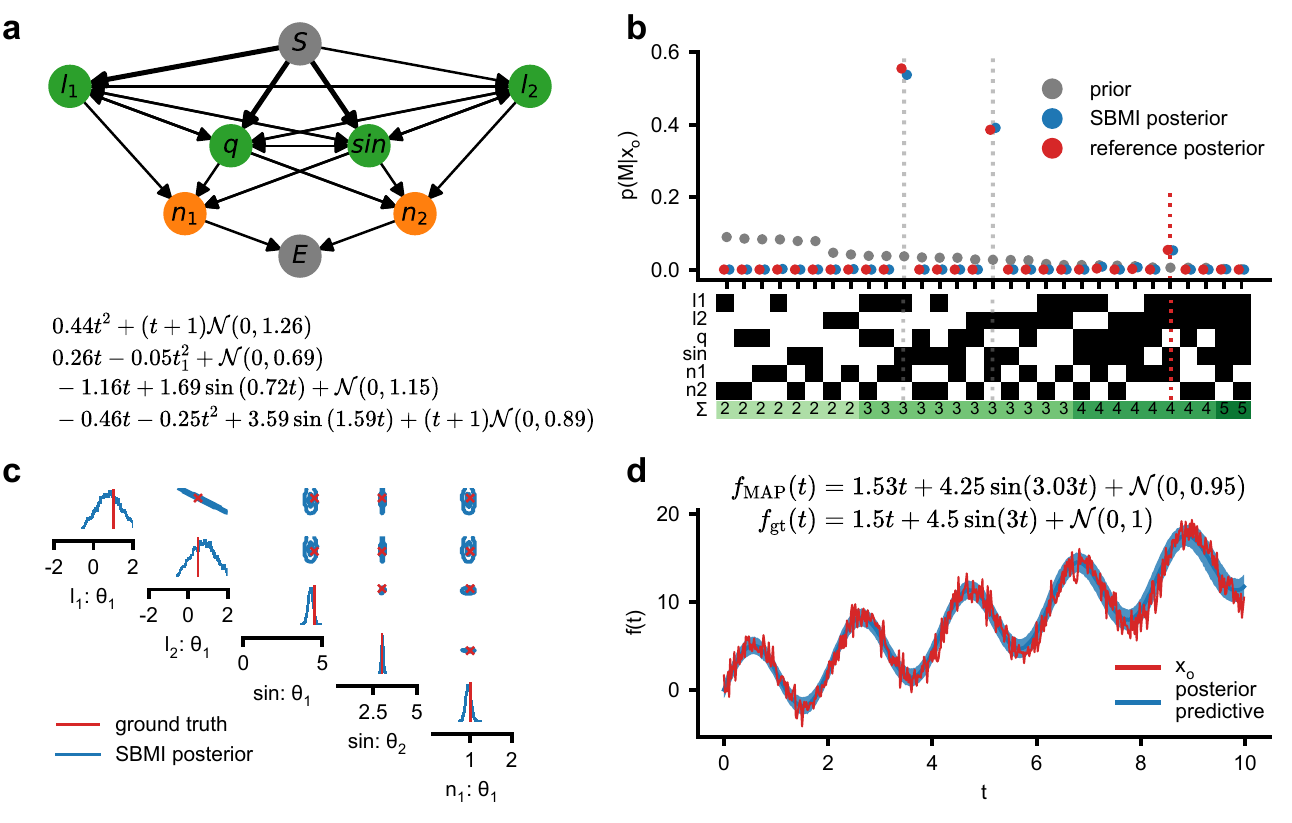}
  \vspace{-6pt}
  \caption{
    \textbf{Additive model. (a)} Model prior represented as a graph, the width of the edges corresponds to their initial weights, which change dynamically. A random walk from \emph{start}  (S) to \emph{end} node (E) corresponds to one draw from the prior. Four prior samples are shown.  
    \textbf{(b)} Empirical prior distribution, reference and SBMI posterior distribution for one example observation, generated by the model highlighted by the red dashed line. The  model vectors are shown as binary image, black indicating the presence of the specific model component. SBMI accurately recovers the posterior over model components.
    Marginal distributions in Fig. \ref{fig:app:additive}.
    \textbf{(c)} One- and two-dimensional marginals of the parameter posterior inferred with SBMI, conditioned on the `true model' (red dotted line in (b)). Note the strongly negatively correlated (degenerate) posterior between the redundant model components $l_1$ and $l_2$. Parameter posteriors for additional models in  Fig. \ref{fig:app:additive}. 
    \textbf{(d)} Predictive samples for an observation $x_o$ from $f_\text{gt}$.  Blue: Mean $\pm$ std. as local uncertainties of the posterior predictives $x \sim p(x|\theta, M)$ with $\theta \sim p(\theta|M,x_o)$.
    \vspace{-6pt}
    }
    \label{fig:additive_model}
\end{figure*}

\paragraph{Local and Global Uncertainties:}
SBMI allows us to calculate two different uncertainties for the posterior predictives, depending on whether uncertainty about model-choice is taken into account or not: \emph{Local} uncertainties \citep{werner2022uncertainty} are defined as the uncertainty of parameter posteriors conditioned on a specific model $M_i$: 
$ x \sim p(x|M_i, \theta) \text{~ with~ } \theta \sim p(\theta|M_i, x_o)$.
In contrast, for the \emph{global} uncertainty, the joint posterior is taken into account:
$ x \sim p(x|M, \theta) \text{~with~} M,\theta \sim p(M,\theta|x_o)$.

\paragraph{Simulation-based Calibration}
To validate the inferred parameter posteriors we perform simulation-based calibration (SBC) on the marginal statistics \cite{talts2018validating}. In SBC each marginal of the true parameter $\theta_o$ is ranked according to the marginals of samples from the parameter posterior $p(\theta|x_o)$ for a simulated data sample $x_o \sim p(x|\theta_o)$. For a well calibrated posterior, the ranks follow a uniform distribution, which we evaluated by a classifier-two-sample-test \citep[c2st, ][]{friedman2003multivariate}.

\section{Experiments}

We demonstrate SBMI on three model classes: An illustration on an additive model of a one-dimensional function $f(t)$, variants of the \emph{drift diffusion model} (DDM) from cognitive science and variants of the \emph{Hodgkin-Huxley model}. %

\subsection{Additive Model} \label{subsec:additive_model}

For the additive model, we used  two linear, a quadratic, a sinusoidal, and two different noise terms (details in Table \ref{tab:additive_model}), all evaluated on an equidistant grid on the interval $[0,10]$. These could be seen as the `base functions' in a symbolic regression task. 
To investigate how SBMI fares in the presence of non-identifiability, we included two identical linear components which only differ in their prior probability. %
We defined the model prior as a dynamically changing graph (Fig. \ref{fig:additive_model}a) which favors simpler models (Fig. \ref{fig:additive_model}b, Appendix \ref{app:model_details_additive}).
We used a CNN followed by fully connected layers for the embedding net (Appendix \ref{app:model_details}). We generated a dataset of 500k prior samples, of which 10\% were used as validation data. 

In the presented model, we have access to the likelihood function $p(x_o|M,\theta)$, and can approximate the model evidence via (importance) MC sampling. We approximate the model evidence $p(M|x_o) \approx \hat p_{\text{reference}}(M|x_o) \sim p(x_o|M)p(M)$ by sampling for each model $M$ corresponding parameters $\theta_i^j$ and evaluating the likelihood $p(x_o|M)$ (Appendix \ref{app:performance measures}).  We call the resulting approximation \emph{reference posterior}, and will use it to evaluate the accuracy of the posterior inferred by SBMI. As the parameter space for $\theta$ can be high-dimensional, and the corresponding posterior distribution $p(\theta|M,x_o)$ can be narrow, a reliable numerical approximation needs an extensive amount of samples and model evaluations for each of the model combination $M$. It is therefore not feasible for larger model spaces. 

Across 100 observations $x_o$ for which we computed  reference posteriors, the Kullback–Leibler divergence (KL) between the reference posterior and the SBMI posterior $\text{KL}(\hat p_{\text{reference}}(M|x_o)||q_\psi(M|x_o))$ decreased substantially with the number of training samples. %
When we replaced the MoGr distribution by a `flat' categorical distribution, and left all other components unchanged, the inference is much less data efficient (Fig. \ref{fig:additive_comparison_cat}a).
Additionally, we compared the marginal distribution of the model posterior to the ground truth model. The performance of the marginal model posterior distributions inferred by SBMI is very similar to that of the reference posteriors
(Table \ref{tab:additive_performance}) with a correlation of 0.85 (Fig. \ref{fig:app:additive_correlation_mmp}). 
We note that, in initial  experiments in which we used masked autoregressive density estimators (MADE) \citep{germain2015made} (instead of the Grasmann mixtures) exhibited worse performances in comparison (Fig. \ref{app:fig:made_results}), indicating the power and flexibility of MoGr distributions.

For the evaluation of the joint posterior $p(M,\theta|x_o)$, we focused on the evaluation in the predictive (data) space. 
To this end, we sampled models $M_l \sim q_\psi(M|x_o)$ and associated parameters $\theta_l \sim q_\phi(\theta|M_l,x_o)$ from the inferred posteriors for 1k test observations $x_o$ and ran the forward model $x_l \sim p(x|M_l,\theta_l)$. Based on these simulations, we calculated the root-mean-squared-error (RMSE) of the simulations $x_l$ to the observed data $x_o$.
The average RMSE between posterior predictive samples follow the same trend as the KL divergence and approach the RMSE between the observations $x_o$ and samples with the same ground truth parameters at around 200k training samples (Fig. \ref{fig:additive_comparison_cat}b). 

Next, we showcase SBMI for a specific example observation $x_o$ in which the ground truth model has two linear, a sinusoidal, and a stationary noise component (Fig. \ref{fig:additive_model}b-d): The SBMI model posterior matches nearly perfectly the reference posterior and predicts the linear components as expected, ordered by the prior probabilities (Fig. \ref{fig:additive_model}b).
The parameter posterior obtained with SBMI and conditioned on the ground truth model accurately recovers the ground truth parameters (Fig.~\ref{fig:additive_model}c). Accessing the joint posterior distribution enables us to first see the perfectly correlated parameter distribution for the slope parameter of the linear components. Second, we detect compensations mechanisms for a model which contains only one linear component: In this case, the predicted parameter for $l_1$  is the sum of the ground truth parameters of $l_1$ and $l_2$, resulting in the same functional expression (Fig.~\ref{fig:additive_model}d). 
For the posterior predictives (Fig.~\ref{fig:additive_model}d) we see that most of the observed data $x_o$ lies within an uncertainty bound of one standard deviation around the mean prediction. 
The local uncertainties overlap almost perfectly in this case, as all models with non-negligible posterior mass have the same expressional form. 
With the inferred model posterior we can easily compute the Bayes factors via $\frac{p(M_1|x_o)}{p(M_2|x_o)} \frac{p(M_2)}{p(M_1)}$ to compare different models on an observation $x_o$, and an example for Fig. \ref{fig:additive_model} is shown in Appendix \ref{app:Bayes_factor_additive}.

\begin{figure}[t]
  \centering
  \includegraphics[width=0.5\textwidth]{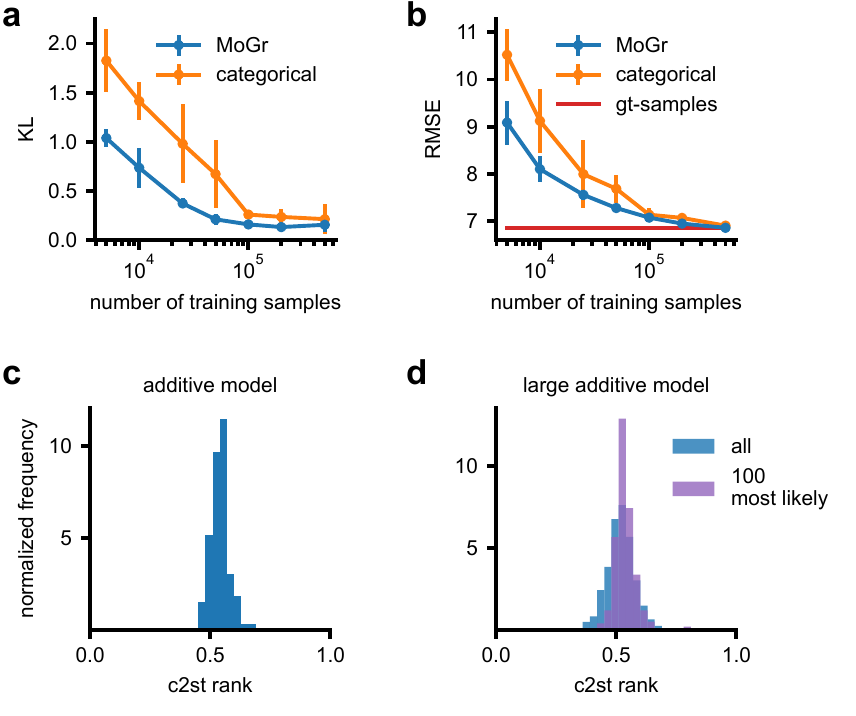}
  \vspace{-18pt}
  \caption{
  \textbf{SBMI performance for the additive model.}  %
    \textbf{(a)} KL divergence of the SBMI model posterior $q_\psi(M|x_o)$ to the reference posterior $\hat{p}_\text{ref}(M|x_o) $ for 100 observations $x_o$. 
    \textbf{(b)} Posterior predictive performance in terms of RMSE between 1k observations $x_o$ and posterior samples $x_l$. Red line indicates RMSE between $x_o$ and samples from the ground truth (gt) model as lower bound.
    (a) and (b) show mean and std. for 5 training runs and different numbers of training samples (from 5k to 500k).
    \textbf{(c)} Histogram of the c2st ranks for the additive models with six components with a c2st mode of 0.54 (0.48/0.61 as .05/.95 percentiles). A value of 0.5 indicates a well calibrated posterior for which the rank statistics are indistinguishable from a uniform distribution.
    \textbf{(d)} Same as (c)  for the additive model with eleven components with a c2st mode of 0.52 (0.43/0.62, `all') and 0.53 (0.48/0.61, `100 most likely').
    See also Fig. \ref{fig:app:SBC_additive} for individual SBC plots. 
    \vspace{-6pt}
    }
    \label{fig:additive_comparison_cat}
\end{figure}

In two additional sets of experiments we increased the space of model components to eleven and twenty  (see Tab. \ref{tab:additive_modelBIG} and Tab. \ref{tab:bigger_model} for details). As calculating reference posteriors is not computationally feasible any more,  we focused on the evaluation in the predective space and calculated in the first set the RMSE for different numbers of training samples for MoGr as well as for categorical model posterior distributions. While the MoGr did almost reach the lower bound with 1M training samples, the categorical distribution had a much worse performance and did not reach this lower bound. When we increased the number of mixture components for the MoGr as well as for the Gaussian MDN from three up to twelve, the RMSE did not change substantially, except that less mixture components were preferable in a low data regime (Fig. \ref{fig:app:additive_BIG_comparison_cat}a). 
In the second set of experiments with twenty model components, the categorical distribution would need to learn all  $2^{20} \approx 1M$ possible combinations, which is not feasible any more. However, the MoGr was still able to learn the posterior distributions and the predictive performance reached almost the lower bound with 1M training sample (Fig. \ref{fig:app:additive_BIG_comparison_cat}b).

When we applied SBC to the parameter posteriors of the additive model, which were conditioned on the “ground truth model”, we found well calibrated posterior distributions for all 30 possible models and parameters (Fig. \ref{fig:additive_comparison_cat}c). This still holds true for the large additive model, for which we included all models with at least 50 samples in the test dataset of 100k samples (Fig. \ref{fig:additive_comparison_cat}d). When we only looked at the 100 most likely models the c2st values were even more tightly centered around one half. We found no systematic bias for individual parameters in our inference method and most posterior ranks fall into the 99\% confidence interval of a uniform distribution (Fig. \ref{fig:app:SBC_additive}). These results show that the parameter posteriors are well calibrated for the additive model, even for models which are less likely under the prior distribution and the posteriors reflect the underlying uncertainty  well. 

\begin{figure*}[th]
  \centering
  \includegraphics[scale=0.95]{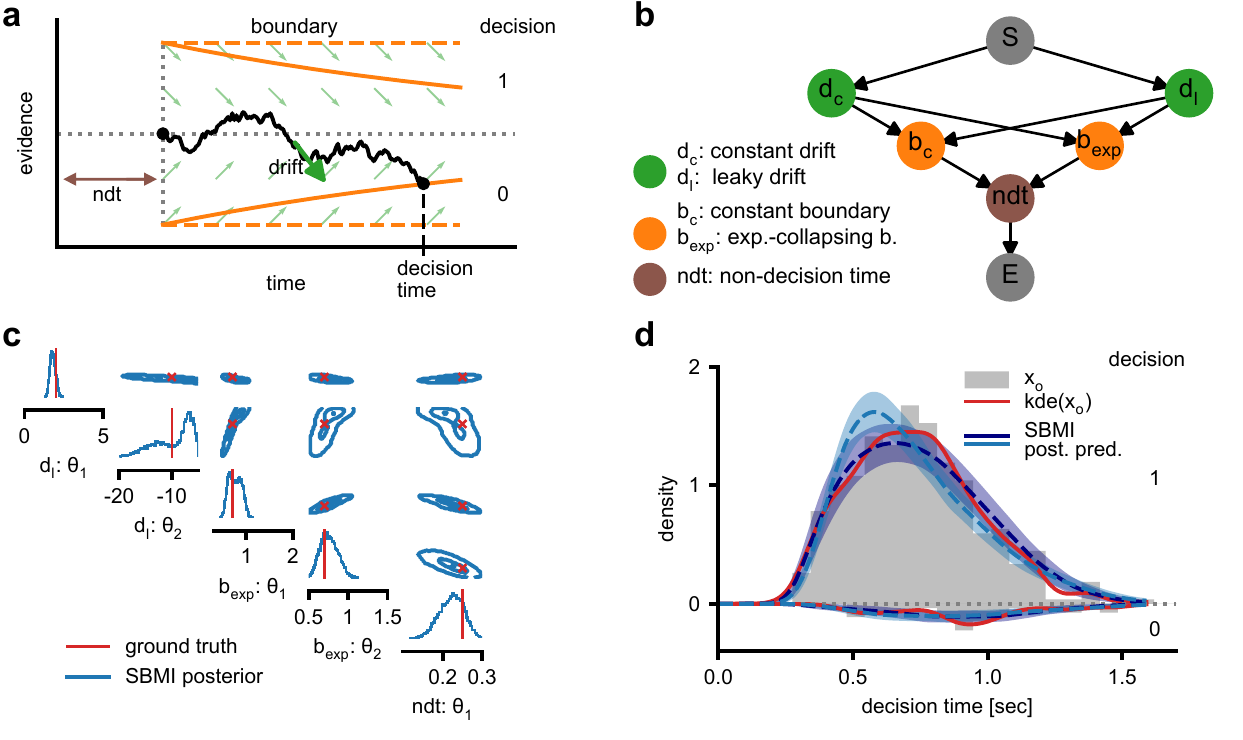}
  \vspace{-6pt}
  \caption{
    \textbf{SBMI on Drift-Diffusion Models. (a)} A decision process is modelled by a one-dimensional stochastic process. A binary decision is taken once the process hits the upper or lower boundary, resulting in a two-dimensional output (a continuous decision time and a binary decision). 
    \textbf{(b)} The model prior is a graph consisting of two drift ($d_c$, $d_l$) and two boundary ($b_c$, $b_{exp}$) components, as well as a non-decision time ($ndt$).
    \textbf{(c)} Example parameter posterior inferred with SBMI for which both, the ground truth model and the predicted model, have leaky drift and exponentially collapsing boundary conditions.
    \textbf{(d)} Posterior predictives with local uncertainties as mean $\pm$ std. for the two most likely models (dark blue with $q_{\psi}(M|x_o)=0.75$ and light blue with $q_{\psi}(M|x_o)=0.25$).
    \vspace{-6pt}}
    \label{fig:ddm}
\end{figure*}

\subsection{Drift-Diffusion Model} \label{subsec:ddm}

After this illustrative example, 
we turn to  DMMs, a scientific model class that we will apply to experimental data.
DDMs can be described by a stochastic differential equation for a decision variable $z$:
$\text{d}z = d(z,t)\text{d}t + \text{d}W$, with initial condition $z_0$, drift term $d$, and a Wiener noise process $W$. A decision is taken when the decision variable hits a boundary $|d(z,t)|\geq b(t)$ (Fig.~\ref{fig:ddm}a). An additional parameter delays the starting time of the process (`non-decision time').  We included two different drift terms (constant and leaky), two boundary conditions (constant and exponentially collapsing), and the non-decision time to our prior  (Fig.~\ref{fig:ddm}b, Appendix \ref{app:model_details_ddm}), resulting in a  highly flexible model class. Similar models have previously been applied to experimental data \citep{shinn2020pyddm}. 

Training data was generated with the \emph{pyDDM} toolbox \citep{shinn2020pyddm} and for each $\theta$ we sampled 400 identically distributed (iid) trials trial. These were embedded by a permutation invariant embedding network \citep{chan2018likelihood,radev2020bayesflow} (Appendix \ref{app:model_details}).

For the DDM, we don't have efficient access to the likelihood and therefore cannot compute reference posteriors. To still evaluate the performance of SBMI, we focus on the evaluation of model posteriors and predictive performances for a test set of 1k data points. The average marginal performance of the model posterior for the drift and boundary components is 0.87$\pm$0.21 (std.) (see Table \ref{tab:ddm_details} for individual performances). For about 40\% of the test data we get highly certain model posteriors with $p(M|x_o)>0.99$, indicating that model identifiability is dependent on the observed data $x_o$.
To measure the performance of the posterior predictives, we compared the mean decision times, the standard deviation of the decision times, and the number of correct trials to the observed data $x_o$. Additionally, we used the mean-squared error (MSE) on the weighted density functions of the two different decisions, similar to \cite{shinn2020pyddm}. The different measures on the posterior predictives for the test data are close to their lower bounds (Table \ref{tab:ddm_performance}), calculated on trials resampled from a model with the ground truth parameter. This suggests that, even for non-identifiable models, the SBMI inferred posterior predictives are close to data from the ground truth model. 

\begin{table}
\small 
\caption{\textbf{DDM posterior predictive performance.} Comparison of decision times (mean $\mu$ and std. $\sigma$) of the ground truth data~($\hat{\cdot}$) to posterior predictive samples.  The lower bound is based on resampling 400 trials with the ground truth parameters. Mean and standard deviation for 1k test points.}
\centering
\begin{tabular}{lll}
\toprule
                 Measure &  Lower bound &    Posterior \\
\midrule
    decision time: $| \mu - \hat \mu|$ & 0.03 (0.04) & 0.08 (0.21) \\ 
    decision time: $| \sigma - \hat \sigma|$ & 0.21 (0.27) & 0.26 (0.35) \\
    deviation correct trials in \%  & 1.10 (1.37) &  1.56 (2.64) \\
    MSE on densities ($\cdot 10^{-2}$) & 3.14 (5.83) & 3.23 (6.57) \\ 
\bottomrule
\end{tabular}
\label{tab:ddm_performance}
\vspace{-12pt}
\end{table}

For an example observation $x_o$ from a model with a leaky drift component and exponential collapsing boundaries, the `true' model has a posterior probability of 0.75 and a model with a constant drift instead has a posterior probability of 0.25, resulting in a Bayes factor of $B=2.32$, or a `barely worth mentioning' difference \citep{jeffreys1998theory}.  For the `true' model the ground truth parameters lie in regions of high parameter posterior mass, with some uncertainty, especially in the leak parameter $\theta_2$ of the drift component (Fig.~\ref{fig:ddm}c). The posterior predictives match the data well if conditioned on the `true' model. For the model with the constant drift term, we see a slight skew to earlier decision times, compared to the model with leaky drift (Fig. \ref{fig:ddm}d). If we inspect the global uncertainties (Fig. \ref{fig:app:ddm}d) we see a good correspondence for the global uncertainties, also reflected in the MSE losses (scaled by $10^2$): For trials resampled with the ground truth parameters we find an MSE of 0.57$\pm$0.13 which matches the MSE of the first model (0.58$\pm$0.14) and the second model is only slightly worse  (0.61$\pm$0.14).
Further inspecting the posterior distributions shows that the model with the constant drift term exhibits shorter non-decision times, larger initial boundaries and faster collapsing boundaries (Table \ref{tab:ddm_example_obs}). Interpreting the inferred values model-independent as behavioral variables can therefore be difficult, as different models might lead to different inferred values. 

\paragraph{DDM on Experimental Data}

To demonstrate SBMI on empirical data, we used a published dataset of perceptual decision-making data from monkeys \citep{roitman2002data} performing a random dot motion discrimination task.  Moving dots with different coherence rates (0, 3.2, 6.4, and 12.8\%) were visually presented and animals had to identify the direction of movement (Appendix \ref{app:model_details_ddm}). 

\begin{figure}[h]
  \centering
\includegraphics[width=0.4\textwidth]{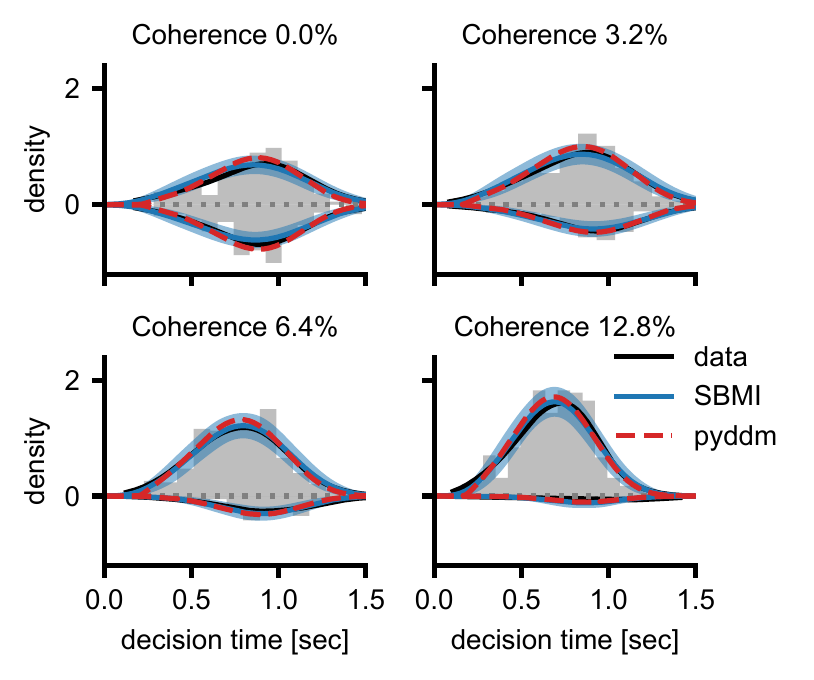}
   \vspace{-6pt}
  \caption{
    \textbf{SBMI on experimental data.} Experimental data with histograms (grey), mean posterior predictives $\pm$ 2std.  and \emph{pyDDM} fits for different coherence rates.
    \vspace{-6pt}
    }
\label{fig:ddm_experimental}
\end{figure}

When we used the trained posterior network to perform amortized inference on the different experimental conditions of the experimental data, the model posterior is certain about the leaky drift and exponentially collapsing boundary component with $p(M|x_o)\approx 1$  for all coherence rates. For all measures on the posterior predictives we found similar mean performances for the SBMI inferred models compared to point estimates (Table \ref{tab:ddm_experimental}). But, as expected, the MSE had higher variances in the different experimental conditions compared to the variance of multiple point estimates (Table \ref{tab:ddm_experimental_detailed}). This can also be seen in the decision time densities of the posterior predictives for which the experimental data lies within the uncertainty bounds (Fig.~\ref{fig:ddm_experimental}), whereas the predictives of the point estimates from \emph{pyDDM} are not distinguishable. 
However, in the parameter space we see that different point estimates  are spread out for some of the parameters, and all lie in regions of high SBMI parameter posterior mass. An example of the two-dimensional marginals for the coherence of 6.4\% is shown in  Figure \ref{fig:app:ddm_2d_pyddm}.

\subsection{Hodgkin-Huxley Model} \label{subsec:HH}

Finally, we apply SBMI to the Hodgkin-Huxley model, a biophysical model for spike generation in neurons. We include a leakage current, four different voltage-gated ion channel types ($Na$, $K$, $K_m$, $Ca_L$) and a noise term (Appendix \ref{app:HH_details}). We encode the domain knowledge that $Na$ and $K$ channels are necessary for  spike generation, resulting in a simple prior (Fig. \ref{fig:app:HHprior}), which could be easily extended to further channel types. 
We replace the embedding net by commonly used summary statistics \cite{gonccalves2020training, scala2021phenotypic}, but leave the inference networks unchanged. 

When we evaluated SBMI on 1k synthetic test traces, we found a mean marginal performance of the model posterior of 0.85$\pm$0.16 (std.) for the two essential model components $K_m$ and $Ca_L$. This performance rises drastically to 0.96$\pm$0.03 if we only include voltage traces with spikes in the test data ($n=439$), indicating that the model components are well identifiable if spikes are present. 
When inspecting the parameter posteriors for the presented examples, the ground truth parameters lie in regions of high posterior mass (Fig. \ref{fig:app:HH_posterior}a).
For the MSE on the posterior predictives' normalized summary statistics we get  0.06$\pm$0.09 for the  example traces shown in Fig. \ref{fig:HH_main} and \ref{fig:app:HH_traces}a, indicating a good performance in the predictive space.

\begin{figure}[th]
  \centering
  \includegraphics[width=0.4\textwidth]{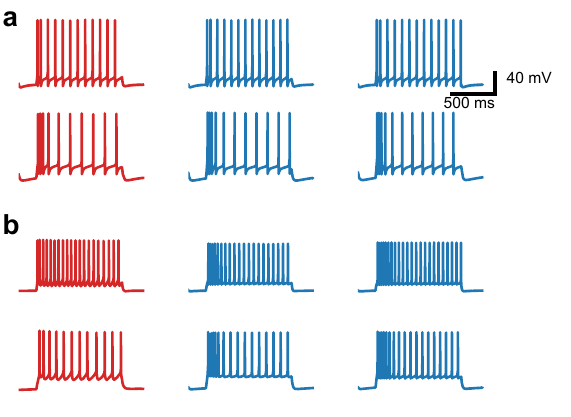}
  \caption{
    \textbf{Posterior predictives of the Hodgkin-Huxley model.} \textbf{(a)} Two synthetic samples (red) with two posterior predictive samples each (blue).  \textbf{(b)} Two voltage recordings from the Allen Cell database with two posterior predictive samples each. See Fig. \ref{fig:app:HH_traces} for more samples, Fig. \ref{fig:app:HH_susmstats}c,d for example summary statistics and Fig. \ref{fig:app:HH_posterior} for SBMI posteriors.
    \vspace{-6pt}
    }
\label{fig:HH_main}
\end{figure} 

To apply SBMI to experimental voltage recordings we took ten voltage traces from the Allen Cell Types database \cite{Allen2016} previously used in \cite{gonccalves2020training}. The model posterior identified $Ca_L$ channel in some (4 out of 10) of the recordings, which were previously not used to fit these traces. The posterior predictives look slightly worse than for the synthetic data, but still capturing the main features (Fig. \ref{fig:HH_main}b), resulting in a MSE of 0.24 $\pm$0.11 (std.) on normalized summary statistics. 

To showcase the influence of the model prior, we run the same SBMI inference scheme with a fully connected model prior, and therefore omitted any domain knowledge for the model structure which we had included before. First of all, for a training dataset of 100k samples this resulted in only 31k spiking models (compared to 41k in the initial experiment). After training, SBMI shows still reasonable performance for most synthetic samples (MSE of $0.10 \pm 0.14$ on the summary statistics). Additionally, for all observations from the Allen dataset the model posterior still had the same ion channel composition as in our initial experiments, except for one trace. However, the posterior predictive performance on the Allen dataset was substantially worse (MSE of $0.48 \pm 0.55$ on the summary statistics, Fig. \ref{fig:app:HH_susmstats}).

When we applied SBC to the Hodgkin-Huxley model we found a c2st mode of 0.53 (with 0.49/0.64 as  .05/.95 percentiles, Fig. \ref{fig:app:HH_SBC}a). While this indicates overall a good calibration, we found parameter specific differences. For example the posteriors for $g_L$ were slightly under confident and for $E_L$ the true parameter was systematically overestimated (Fig. \ref{fig:app:HH_SBC}b). However, given the relatively small training dataset of 100k prior samples, and the complexity of the Hodgkin-Huxley model, the posteriors are relatively well calibrated.

\section{Discussion} \label{sec:discussion}

We presented SBMI, a method for inferring posterior distributions over both model components of scientific simulators and their associated parameters end-to-end. %
For the model inference network, we used a mixture of conditional multivariate binary Grassmann distributions to flexibly and efficiently approximate posterior distributions over models.  %
To deal with the variable dimensionality of the parameter posterior, we used a Gaussian Mixture Density Network which allows efficient marginalization over absent model components during training time. %
By inferring the joint posterior distribution over models and parameters, SBMI allows us to learn parameter dependencies between model components and compensatory mechanisms, in a fully amortized way. 

We first showcased SBMI on additive models and showed that posteriors retrieved by SBMI are in very close agreement with reference posteriors. %
Our application of DMMs yielded posteriors with highly accurate posterior predictives, and allowed identification of compensatory mechanisms for some parameters. 
This demonstrates the  importance of a `model-aware' interpretation of parameter posteriors, enabled by SBMI. 
On experimental data, SBMI automatically retrieved a model which was previously suggested by domain-scientists 
\citep{shinn2020pyddm}, and which outperformed simpler alternatives.
Finally, we ran SBMI on the Hodgkin-Huxley model, both on synthetic data and voltage recordings from the Allen Celltype database. For most of the traces we recover the same model structure as previously used to fit these traces but for some traces additional $Ca_L$ channels might be advantageous. Further in-depth experiments with more channel types could be run to identify the cell mechanisms more exactly.

SBMI gives us not only access to full parameter posteriors, but also infers the uncertainty related to the model \emph{choice} itself and potential interactions between the parameters of different model components. %
For symbolic regression, a similar perspective was presented \cite{werner2022uncertainty} who estimated local uncertainties by Laplace approximations, %
and used a fixed number of equations for the global uncertainty. While this gives some measure of uncertainty, SBMI is able to recover the \emph{full} posterior and its associated uncertainty. 

SBMI enables us to compare different model compositions in a fully amortized manner, allowing scientists to test and compare a large set of competing theories without the need to exhaustively infer each possible combination individually for separate comparisons based on Bayes-factors. Additionally, the amortized nature of SBMI makes it easy to check how robust posteriors over models are when observations change. 
Similarly, amortization also makes it straightforward to perform additional coverage and calibration tests \citep{zhao2021diagnostics,hermans2021averting}. %

For real-world applications the success of SBI relies on appropriate prior choices \citep{oesterle2020bayesian,deistler2022energy} and a well chosen model prior for SBMI can not only increase the data efficiency by simulating more “meaningful” models, but can also enhance the model inference by decreasing the space of possible combinations. 
Although the presented framework already covers many scientific scenarios, the representation of model prior could be further enhanced by lifting the restriction of an ordered model vector of fixed length. %
Using more flexible embedding networks like transformers \citep{vaswani2017attention, lee2019settransformer} could be used to generalize SBMI to simulator outputs $x$ of varying size \citep{biggio2021neural}.

In summary, SBMI provides a powerful tool for data-driven scientific inquiry. It will allow scientists to systematically identify essential model components which are consistent with observed data. Incorporating the uncertainty into their model choices will help to resolve competing models and theories.

\section*{Impact Statement}
We used data recorded from animal experiments in monkey and
mouse. The data we used were recorded in independent experiments and are publicly
available \cite{roitman2002data,Allen2016}.

While our paper presents work whose goal is to advance the field of Machine Learning and scientific discovery, there are many potential societal consequences of our work, none which have unique implications that warrent specifically being highlighted here.

\section*{Acknowledgements}
We thank all group members of the Mackelab for their insightful discussions and valuable feedback on the manuscript. 
This work was funded by the Deutsche Forschungsgemeinschaft (DFG, German Research Foundation) under Germany’s Excellence Strategy – EXC number 2064/1 – 390727645, and under SFB 1233, Robust Vision: Inference Principles and Neural Mechanisms, project 6, number: 276693517 and by the German Federal Ministry of Education and Research (BMBF): Tübingen AI Center, FKZ: 01IS18039A. Co-funded by the European Union (ERC, DeepCoMechTome, 101089288).

\bibliography{library}
\bibliographystyle{icml2024}

\newpage
\appendix
\onecolumn
\normalsize
\newpage
\setcounter{figure}{0}
\renewcommand{\thefigure}{S\arabic{figure}}
\setcounter{table}{0}
\renewcommand{\thetable}{S\arabic{table}}
\setcounter{section}{0}
\renewcommand{\thesection}{A\arabic{section}}

\section*{\LARGE Appendix}

\section{Software and Computational Ressources}

Code is available at \url{https://github.com/mackelab/simulation_based_model_inference}.

All networks were implemented in \emph{pytorch} \cite{paszke2019pytorch}.
Additionally, we used the following software and toolboxes in this work: \emph{sbi} \cite{tejerocantero2020sbi} for the implementation of SBMI, \emph{NetworkX} \cite{hagbert2008networkX} for the construction of prior graphs, \emph{SymPy} \cite{meurer2017sympy} for symbolic calculations, \emph{pyDDM} \cite{shinn2020pyddm} as the backend for the DDM experiments. 
To manage the configuration settings we used \emph{Hydra} \cite{Yadan2019Hydra} and the \emph{Optuna Sweeper} \cite{optuna_2019} plugin for a coarse hyperparameter search in the DDM setting.

All models were trained on an Nvidia RTX 2080ti GPU accessed via a slurm cluster.

\section{Mixture of Grassmann Distribution} \label{app:grassmann}

Previously, Arai introduced the Grassmann formalism for multivariate binary distributions \cite{arai2021grassmann} by using anticommuting numbers, called Grassmann numbers. A Grassmann distribution $\mathcal{G}$ is an $n$-dimensional binary distribution parameterized by an $n\times n$ matrix $\Sigma$. The probability mass function of $\mathcal{G}$ with parameter $\Sigma$ on $Y=(Y_1,...,Y_n)$  is defined as 
\begin{equation*}
    \mathcal{G}(y|\Sigma) = \det \begin{pmatrix}
                            \Sigma_{11}^{y_1} (1-\Sigma_{11})^{1-y_1} & \Sigma_{12} (-1)^{1-y_2} & \cdots\\
                            \Sigma_{21} (-1)^{1-y_1} &\Sigma_{22}^{y_2} (1-\Sigma_{22})^{1-y_2} &  \cdots\\
                            \vdots & \vdots & \ddots
                        \end{pmatrix}.
\end{equation*}
For a valid distribution $\Sigma^{-1}-{I}$ must be a $P_0$ matrix, but has otherwise no further constraints \cite{arai2021grassmann}.

This definition gives access to the analytical derivations of properties such as the mean, covariance, and marginal and conditional distributions. Here, we only recapitulate the analytical formula for conditional distribution, which is used for sampling. Their derivation and further details can be found in \cite{arai2021grassmann}.
In the following paragraph we follow the notation of Arai \cite{arai2021grassmann}.

For a conditional distribution on $Y=(Y_1,...,Y_n)$, we denote by $C$ the indices of the observed variables $y_j \in \{0,1\}$ and $R$ the remaining indices $R = \{1,...,n\}\setminus C$. Without loss of generality, the parameter matrix can be written as 
\begin{equation*}
    \Sigma = \begin{pmatrix}
                \Sigma_{RR} & \Sigma_{RC} \\
                \Sigma_{CR} & \Sigma_{CC}
            \end{pmatrix}.
\end{equation*}
The conditional distribution is then given by the Grassmann distribution
\begin{equation*}
    p(y_R | y_C) = \mathcal{G}(y_R | \Sigma_{R|y_C}) 
\end{equation*}
with 
\begin{equation*}
   \Sigma_{R|y_C} = \Sigma_{RR} - \Sigma_{RC}\big(\Sigma_{CC}-\text{diag}(1-y_C)\big)^{-1}\Sigma_{CR},
\end{equation*}
where $\text{diag}(1-y_C)$ is the diagonal matrix with $(1-y_C)$ on its diagonal. 
An analogous formula can be derived by using the notation $\Lambda^{-1} = \Sigma$ \cite{arai2021grassmann}.

\paragraph{Mixture of Grassmann Distribution}
We define a mixture of Grassmann distribution (MoGr) on $\{0,1\}^n$ in the same formalism as $p(y) =\sum_i \alpha_i \mathcal{G}_i(y|\Sigma_i)$ for a finite partition $\sum_i \alpha_i = 1$ and Grassmann distributions $\mathcal{G}_i$. Using the means $\mu_i$ and covariances $C_i$ for each component $\mathcal{G}_i$ we can calculate the mean and covariance for the mixture distribution by introducing a discrete latent variable $Z$ and reformulate the mixture distribution as
\begin{align*}
    p(y|Z=i)&=\mathcal{G}_i(y|\Sigma_i), \\
    p(Z=i) &= \alpha_i.
\end{align*}
Using the law of total expectation and variance we get analytical expressions for the mean and covariance of a MoGr: 
\begin{equation*}
    \E[Y]=\E[\E[Y|Z=i]] = \sum_{i} \alpha_i \mu_i
\end{equation*}
and 
\begin{align*}
    \Cov(Y) &= \E[\Cov(Y|Z=i)]+\Cov(\E[Y|Z=i]) \\ 
            &=\sum_{i} \alpha_i C_i + \sum_{i} \alpha_i (\mu_i-\bar{\mu})(\mu_i-\bar{\mu})^T,
\end{align*}
where $\bar{\mu}=\E[Y]$.

To sample from a MoGr we use the standard procedure of first sampling one component $z_i \sim p(Z)$, and then using the conditional expression of a Grassmann distribution to sample $y\sim \mathcal{G}_{z_i}$.

\paragraph{Implementation}
Arai \cite{arai2021grassmann} proposed the following parametrization for $\Sigma$ that ensures the $P_0$ criterion for $\Sigma$:
\begin{equation*}
    \Sigma^{-1} = B C ^{-1} + I,
\end{equation*}
where $B$ and $C$ are strictly row diagonal dominant matrices, namely
\begin{equation*}
    b_{ii} > \sum_{j\neq i} |b_{ij}|, \quad \text{and} \quad c_{ii} > \sum_{j\neq i} |c_{ij}|.
\end{equation*}
We make use of this parametrization by optimizing unconstrained matrices $\tilde{B}$ and $\tilde{C}$ and defining $B$ by replacing the diagonal elements of $\tilde{B}$ by
\begin{equation*}
        b_{ii} = \exp(\tilde{b}_{ii}) + \sum_{j\neq i} |\tilde{b}_{ij}|,
\end{equation*}
and analogously for $C$. Instead of $\exp$ any other positive function could be chosen and even the non-negative ReLU function showed good training behaviour in initial experiments.

We used a similar parameterization for a mixture of Grassmann distribution for each component and a softmax layer to learn the partition $\sum_i \alpha_i =1$.

\section{Model Prior} \label{app:sec:model_prior}

\paragraph{Sampling from the prior:}
A draw from the prior corresponds to a random walk on the graph  $(\mathcal{M}, E, \mathcal{R})$, starting at node $M_0$ and walking through the graph in the following way (see Algorithm \ref{alg:prior} for pseudocode): For each step, we first normalize the outgoing edges for the current node $M_c$. We use these weights to sample the next node $M_{c+1}$ from a categorical distribution on all connected nodes and append it to the set of sampled nodes $S$. Next, we update all weights following the updating rules $\mathcal{R}$. We then start the next step by normalizing the outgoing edges of the updated current node and repeat this procedure until the end node $M_{N+1}$ is reached. 

\SetKwComment{Comment}{\# }{}
\begin{algorithm}[H]
 \textbf{Inputs:} Directed graph with dynamically changing weights: $(\mathcal{M},\mathcal{E}, \mathcal{R})$, \\
 with nodes $\mathcal{M}=\{M_i\}_{i \in 0,...,N+1}$, edges $\mathcal{E} = \{e_{ij}=(M_i, M_j,w_{ij}) | M_i, M_j \in \mathcal{M}, w_{ij} \in \mathbb{R}_{\geq 0} \}$, and updating rules   $\mathcal{R}=\{R_k|R_k:(\mathcal{E},S) \to \mathcal{E},~ S \subset \mathcal{M}\}_{k \in K}$  \\
 \textbf{Outputs:} Prior sample $\Tilde{M} \sim p(M)$.\\
$S \leftarrow \{M_0\}$ \Comment*[r]{initialize set of sampled nodes}
$M_c \leftarrow M_0$ \Comment*[r]{initialize current node}
\While{$M_{N+1}$ $\notin S$}{
 $\Tilde{w}_{ci} \leftarrow w_{ci}/\sum_i w_{ci}$   \Comment*[r]{normalize outgoing edges}
 $q_c \leftarrow \text{Cat} (\Tilde{w}_{ci}, \{M_i | \Tilde{w}_{ci}>0 \})$  \Comment*[r]{define categorical distribution}
 $M_{c+1} \sim q_c$ \Comment*[r]{sample next node}
 $S \leftarrow S \cup \{M_{c+1}\}$ ;\\
 \For{$R_k \in \mathcal{R}$}{ 
    $\mathcal{E} \leftarrow R_k(\mathcal{E},S)$   \Comment*[r]{update weights}
 }
 $M_c \leftarrow M_{c+1}$ ;
}
$\Tilde{M} = [1 \text{ if } M_i \in S \text{ else } 0, \text{ for } i \in 1,...,N ] $ \Comment*[r]{convert to binary vector of dimension N}
\textbf{return} $\Tilde{M}$\\
\caption{Sampling procedure for model prior}\label{alg:prior}
\end{algorithm}

\paragraph{Example rules $\mathcal{R}$:}
\begin{enumerate}
    \item Example rule $R^1_{x}$ to ensure a conditionally acyclic graph: For a sampled $M_x$ we set all ingoing edges of  $M_x$ to zero:
    \begin{equation*}
    R^1_{x} = \{ \text{if } M_x \in S:  w_{ix} = 0 ~ \forall ~ i = 0,...,N+1\}
    \end{equation*} 
    \item Example rules $R^2_{xy}$ and $R^2_{yx}$ to avoid the co-occurrence of specific components $M_x$ and $M_y$: We set all ingoing edges to $M_y$ to zero if $M_x$ was already sampled and the other way round:
    \begin{align*}
    &R^2_{xy} = \{ \text{if } M_x \in S:  w_{iy} = 0 ~ \forall ~ i = 0,...,N+1\} , \text{ ~and} \\
    &R^2_{yx} = \{ \text{if } M_y \in S:  w_{ix} = 0 ~ \forall ~ i = 0,...,N+1\}.    
    \end{align*}
    \item Example rules $R^3_{xy}$ and $R^3_{yx}$ to decrease the co-occurrence of specific components $M_x$ and $M_y$ by decreasing the weight of all ingoing edges to $M_y$ by a constant factor $c$ if $M_x$ was already sampled and the other way round:
    \begin{align*}
    &R^3_{xy} = \{ \text{if } M_x \in S:  w_{iy} = c w_{iy}  ~ \forall ~ i = 0,...,N+1 \text{ and } 0<c<1\} , \text{ ~and} \\
    &R^3_{yx} = \{ \text{if } M_y \in S:  w_{ix} = c w_{ix} ~ \forall ~ i = 0,...,N+1 \text{ and } 0<c<1 \}.    
    \end{align*}
    \item Example rule $R^4_{x\text{end}}$ to favor simpler models. For a sampled $M_x$ with no direct vertex to the end node, we increase the weights vertices which are directly connected to the end node by a constant factor $c$:
    \begin{equation*}
     R^4_{x\text{end}} = \{ \text{if } M_x \in S \text{ and }  w_{x\text{end}} = 0:  w_{i\text{end}} = c  w_{i\text{end}} ~ \forall ~ i = 0,...,N+1 \text{ and } c > 1\}.
    \end{equation*} 
    
\end{enumerate}

\section{Inference} \label{app:inference}

\SetKwComment{Comment}{\# }{}
\begin{algorithm}[H]
 \textbf{Inputs:} Model prior $p(M)$, parameter priors $p(\theta|M)$, compiler $C$, number of simulations $L$, embedding net $e_\zeta(x)$, model posterior network $q_\psi(M|e)$, parameter posterior network $q_\phi(\theta|M,e)$.\\
 \textbf{Outputs:} Trained embedding network $e_\zeta(x)$, model posterior network $q_\psi(M|x)$, parameter posterior network $q_\phi(\theta|M,x)$.\\
\textbf{Generate dataset:}\\
~\
    \For{$l=1,...,L$}{
     $M_l \sim p(M)$   \Comment*[r]{sample model}
     $\theta_l \sim p(\theta|M_l)$ \Comment*[r]{sample parameters}
     $S_l \leftarrow C(M_l,\theta_l)$ \Comment*[r]{compile simulator}
     $x_l \sim S_l$ \Comment*[r]{simulate data}
    }
    \textbf{return} $\{(M_l,\theta_l,x_l)\}_{l=1,...,L}$\\
\textbf{Training:} \Comment*[r]{We omit the use of training batches here.}
~\
    \While{not converged}{
        $\mathcal{L}^M \leftarrow  - \frac{1}{L} \sum_l\log ~q_\psi(M_l|e_\zeta(x_l))$  \Comment*[r]{compute model loss}
        $\mathcal{L}^\theta \leftarrow  - \frac{1}{L} \sum_l\log ~q_\phi(\theta_l|M_l,e_\zeta(x_l)) $ \Comment*[r]{compute parameter loss}
        $(\zeta, \psi,\phi) \leftarrow (\zeta, \psi,\phi) - \text{Adam}(\nabla_{(\zeta, \psi,\phi)} (\mathcal{L}^M + \mathcal{L}^\theta))$ \Comment*[r]{take gradient step}
    }
    \textbf{return}  $e_\zeta(x)$,  $q_\psi(M|x)$, $q_\phi(\theta|M,x)$\\
    \caption{Simulation-base model inference: SBMI}\label{alg:SBMI}
\end{algorithm}

\subsection{Model Posterior Network} \label{app:model_posteriror_network}
We used a conditional MoGr distribution as model posterior network. 
The conditional parameters $\Sigma_i|x$ are parameterized by two matrices $B_i$ and $C_i$ (Section \ref{app:grassmann}). We used a fully connected neural network with ReLU activation to parametrize the unconstrained matrices $\tilde B_i$, $\tilde C_i$ and a softmax layer for the partition $\alpha$ with $\sum_i \alpha_i =1$.
The input to the MoGr network is the output $e(x)$ of the embedding net and the used hyperparameters can be found in Table \ref{tab:additive_network_details}  and \ref{tab:ddm_network_details}.

\subsection{Parameter Posterior Network} \label{app:parameter_posteriror_network}
For the paremeter posterior network, we used a conditioned mixture of (Gaussian) density network, which allowed us to marginalize analytically over the parameters of the absent model components during training time. 
For efficient training, we divided each batch into sub-batches with the same number of parameters and processed each sub-batch in parallel. 

The conditioning network was implemented as fully connected network with ReLU activation. The specifics for the different settings can be found in Table \ref{tab:additive_network_details},  
\ref{tab:additive_network_detailsBIG}, \ref{tab:ddm_network_details}, and \ref{tab:HH_network_details}.

\subsection{Computational Efficiency}
The parameter posterior network shares the computational complexity with standard MDNs, as marginalization of the MDN is performed analytically. The evaluation of the MoGr distribution involves computing a determinant of a $n \times n$ matrix, which in general has the complexity of $O(n^3)$. However, $n$ is the number of model components, which is relatively small in the presented work (up to 20 for our experiments).

\subsection{Training}
We used the standard training loop of the \emph{sbi} toolbox \cite{tejerocantero2020sbi}: as validation set, we used 10\% of the training samples and as stopping criterion we defined 25 consecutive epochs of no improvement of the loss function on the validation set. 

For the additive model we used a batch size of 3000 samples, for the DDM a batchsize of 2000 samples, and for the Hodgkin-Huxley model of 4000 samples.

\section{Performance Measures}
\label{app:performance measures}
\subsection{MAP Estimate} \label{app:MAP}
Once we trained the full network, we can easily get a \emph{maximum a posteriori} estimate (MAP) by searching the discrete model space: 
\begin{align*}
    \max_{M, \theta } p(M, \theta| x_o ) & = \max_{M, \theta} p(M|x_o) \cdot p(\theta | M, x_o)  \\
                            & = \max_{i \in I } \{ p(M_i|x_o) \cdot \max_\theta p(\theta | M_i, x_o)  \}.   
\end{align*} 

While mathematically correct, this MAP is often dominated by the density function of the parameter posterior, which can take arbitrarily large values for small variances and can be susceptible to noise in the training process. The discrete distribution $p(M|x_o)$ is, however, bounded in $[0,1]$. Therefore, we are often interested in the more stable MAP parameter estimates  of the most likely model: \[\theta_{map}=\argmax_{ \theta } p( \theta| M_{map}, x_o ),\]
where $M_{map} = \argmax\limits_{M_i, i \in I } p(M_i|x_o)$.

This MAP of the most likely model is shown as $f_{MAP}$ in Figure \ref{fig:additive_model}d.

\subsection{Additive Model}
For the additive model we can approximate the ground truth model posterior $p(M|x_o)$ by calculating the model evidence by
\begin{equation*}
    p(M|x_o) = \frac{p(x_o|M) p(M)}{p(x_o)}  \sim p(x_o|M) p(M).
\end{equation*}
The prior $p(M)$ is only given implicitly, but as the model space is low-dimensionial, we can approximate the prior by the empirical sampling distribution $\hat{p}(M)$ (shown in Fig. \ref{fig:additive_model}).
We therefore get the approximation
\begin{align*}
    p(M|x_o) & \sim {p}(M) \int p(x_o|M,\theta)p(\theta|M) d\theta \\
        & \approx \hat{p}(M) \frac{1}{N} \sum_{j=1}^N p(x_o|M,\theta_j), 
\end{align*}
where $\theta_j$ are samples from the parameter prior $p(\theta|M)$. Since we used a Gaussian noise model, we can calculate the expression $p(x_o|M,\theta_j)$ by evaluating $\mathcal{N}(f_{\theta_j}(t), \Sigma_{\theta_j}(t))$.

In practice, we apply importance sampling to avoid regions with a low probability, such that we get
\begin{equation*}
    p(M|x_o)  \sim \hat{p}(M) \frac{1}{N} \sum_{j=1}^N p(x_o|M,\theta_j) \frac{p(\theta_j|M)}{q_\phi (\theta_j| M, x_o)},
\end{equation*}
where $\theta_j \sim q_\phi (\theta| M, x_o)$ are samples from the approximated parameter posterior.  

Even with importance sampling, a lot of samples were necessary to get reliable estimates.  We used 100k samples $\theta_j \sim q_\phi (\theta| M, x_o)$ per observation and were therefore restricted to few observations $x_o$ (100 for the presented results in Section \ref{subsec:additive_model}). 

\subsection{DDM}
We used the mean-squared error (MSE) on the weighted density functions of the two different decisions, similar to \cite{shinn2020pyddm}. In the same work, they showed that the relative MSE is in good correspondence with other performance metrics on the used experimental data. We therefore used the loss function implemented as \texttt{LossSquaredError} in the \emph{pyDDM} package \cite{shinn2020pyddm}.

\section{SBMI Loss and Kullback-Leibler Divergence }

\begin{proposition} \label{proof:KL} Optimizing the SBMI loss function
    $\mathcal{L}(\psi, \phi) = - \frac{1}{L} \sum_{l} \mathcal{L}_{M_l}(\psi) + \mathcal{L}_{\theta_l}(\phi)$ minimizes the expected Kullback-Leibler divergence between the true joint posterior  $p(M,\theta|x)$ and the approximation $q_\phi(M|x)q_\psi(\theta|M,x)$:
 \begin{equation*}
     \mathbb{E}_{p(x)} \big [D_{KL}( p(M,\theta|x)|| q_\phi(M|x)q_\psi(\theta|M,x)\big ].
 \end{equation*}
    
\end{proposition}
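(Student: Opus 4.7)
The plan is to show that the expected training loss, taken with respect to the data-generating joint distribution $p(M,\theta,x) = p(x|M,\theta)\,p(\theta|M)\,p(M)$, equals the target KL functional up to an additive constant that is independent of the network parameters $(\psi,\phi)$. Since the empirical loss $\mathcal{L}(\psi,\phi)$ is a Monte Carlo estimator of this expectation (the samples $\{(M_l,\theta_l,x_l)\}$ are drawn exactly according to $p(M,\theta,x)$), optimizing it is consistent with minimizing the KL objective.

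First I would compute the expected loss. By linearity and by combining the two log-terms:
\begin{equation*}
\mathbb{E}_{p(M,\theta,x)}\bigl[-\log q_\psi(M|x) - \log q_\phi(\theta|M,x)\bigr]
= \mathbb{E}_{p(M,\theta,x)}\bigl[-\log\bigl(q_\psi(M|x)\,q_\phi(\theta|M,x)\bigr)\bigr].
\end{equation*}
Next, I would factor $p(M,\theta,x) = p(x)\,p(M,\theta|x)$ and rewrite the expectation as an iterated one, giving
\begin{equation*}
\mathbb{E}_{p(x)}\mathbb{E}_{p(M,\theta|x)}\bigl[-\log\bigl(q_\psi(M|x)\,q_\phi(\theta|M,x)\bigr)\bigr].
\end{equation*}

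Then I would expand the target KL divergence in its standard cross-entropy minus entropy decomposition:
\begin{equation*}
D_{KL}\bigl(p(M,\theta|x) \,\|\, q_\psi(M|x)\,q_\phi(\theta|M,x)\bigr)
= \mathbb{E}_{p(M,\theta|x)}\bigl[-\log\bigl(q_\psi(M|x)\,q_\phi(\theta|M,x)\bigr)\bigr] - H\bigl(p(M,\theta|x)\bigr),
\end{equation*}
where the entropy $H(p(M,\theta|x))$ depends only on the true posterior, not on $(\psi,\phi)$. Taking $\mathbb{E}_{p(x)}[\cdot]$ of both sides and comparing with the expected loss above shows
\begin{equation*}
\mathbb{E}_{p(M,\theta,x)}\bigl[\mathcal{L}_M(\psi)+\mathcal{L}_\theta(\phi)\bigr]
= \mathbb{E}_{p(x)}\bigl[D_{KL}(p(M,\theta|x)\,\|\,q_\psi(M|x)\,q_\phi(\theta|M,x))\bigr] + C,
\end{equation*}
with $C = \mathbb{E}_{p(x)}[H(p(M,\theta|x))]$ independent of $(\psi,\phi)$. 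Hence any minimizer of the expected loss is a minimizer of the expected KL, and by the law of large numbers the empirical loss $\mathcal{L}(\psi,\phi)$ converges to this expectation as $L\to\infty$.

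The proof has essentially no hard obstacle; it is the standard equivalence between maximum likelihood and KL minimization, applied to a factorized variational family $q(M,\theta|x)=q_\psi(M|x)\,q_\phi(\theta|M,x)$. The only subtlety worth stating explicitly is that the factorization of the loss matches the factorization of the approximating family, so that the two log-terms combine into $\log q_\psi(M|x)q_\phi(\theta|M,x)$ rather than two separate KLs; this is what justifies calling the approximation a joint posterior over $(M,\theta)$ despite being trained with two separate heads. A brief remark could note that integration (over $\theta$) and summation (over $M$) are both justified by the discreteness of $M$ and the fact that the densities are treated with respect to the product of counting measure and Lebesgue measure throughout.
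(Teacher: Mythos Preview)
Your proposal is correct and is essentially the same argument as the paper's proof: both rely on expanding the KL divergence into a cross-entropy term (matching the expected per-sample loss) plus the entropy of the true posterior, which is the $(\psi,\phi)$-independent constant $C$. The only cosmetic difference is direction---the paper starts from the expected KL and reduces to the empirical loss, whereas you start from the expected loss and build up to the KL---but the algebraic content and the Monte Carlo step are identical.
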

    
\begin{proof}
    \begin{align*}
    &\mathbb{E}_{p(x)}[D_{KL}\left( p(M,\theta|x)|| q_\phi(M|x)q_\psi(\theta|M,x) \right)]\\
    & = \mathbb{E}_{p(x)}\left [\mathbb{E}_{p(M,\theta|x)}\left[\log  \frac{p(M,\theta|x)}{q_\phi(M|x)  q_\psi(\theta|M,x)}  \right]\right] \\
    & = \mathbb{E}_{p(x,M,\theta)}\left[ -\log  q_\phi(M|x)  -q_\psi(\theta|M,x)  \right] + C\\
    & \approx \frac{1}{L} \sum_{l} \big( -\log  q_\phi(M|x)  -q_\psi(\theta|M,x)  \big)+ C  \\
    & = - \frac{1}{L} \sum_{l} \mathcal{L}_{M_l}(\psi) + \mathcal{L}_{\theta_l}(\phi) + C\\
    & = \mathcal{L}(\psi, \phi) +C
\end{align*}
where $C$ is a constant independent of $\phi$ and $\psi$.
\end{proof}

\section{Model Details} \label{app:model_details}

\subsection{Additive Model}
\label{app:model_details_additive}

\begin{table}[h]
\caption{ \textbf{Details for the additive model with six components.} The parameter $\theta_1$ in the noise terms $n_1$ and $n_2$ defines the standard deviation of a normal distribution $\mathcal{N}$, and $\mathcal{U}(a,b)$ defines a uniform distribution on the interval $[a,b]$.  Overall the model has seven parameters.
For the performance we report the mean and standard deviation.}
          \label{tab:additive_model_2}
      \centering
        \begin{tabular}{lclccc}
            \toprule
            Model Component     &   Token    & Parameter Prior  & \makecell[c]{Performance \\ $\hat p_{\text{reference}}(M_i|x_o)$}& \makecell[c]{Performance \\  $q_\psi(M_i|x_o)$}    \\
            \midrule
            $\theta_1 \cdot t $ & $l_1$ & $\theta_1 \sim \mathcal{U}(-2,2) $ &0.70 (0.27) & 0.65  (0.24) \\ %
            $\theta_1 \cdot t $  &  $l_2$  & $\theta_1 \sim \mathcal{U}(-2,2) $   &0.70 (0.26) & 0.67 (0.24) \\ %
            $\theta_1 \cdot t^2 $  & $q$   & $\theta_1 \sim \mathcal{U}(-0.5,0.5) $  & 0.97 (0.09) & 0.93 (0.15) \\ %
            $\theta_1 \cdot \sin(\theta_2 t ) $ & $sin$ &\makecell[l]{$\theta_1 \sim \mathcal{U}(0,5)$ \\$\theta_2 \sim \mathcal{U}(0.5,5) $}  & 0.95 (0.15) & 0.91 (0.18) \\ %
            noise$_1$: $n_{t_i} \sim \mathcal{N}(0,\theta_1)$ & $n_1$ & $\theta_1 \sim \mathcal{U}(0.1,2) $ & 1.00 (0.00) & 1.00 (0.00) \\ %
            noise$_2$: $n_{t_i} \sim (t_i+1) \mathcal{N}(0,\theta_1)$& $n_2$ & $\theta_1 \sim \mathcal{U}(0.5,2) $ & 1.00 (0.00) & 1.00 (0.00) \\ %
            \bottomrule
        \end{tabular}
    \label{tab:additive_model}
\end{table}

\begin{table}[h]
\caption{ \textbf{Details for the additive model with eleven components.} The noise terms are the same as in Table \ref{tab:additive_model}. Overall the model has 13 paramters. For the performance we report the mean and mean of the standard deviation on training with 1M datapoints across 5 optimization runs. }
      \centering
        \begin{tabular}{lclc}
            \toprule
            Model Component     &   Token    & Parameter Prior & \makecell[c]{Performance \\ $ q_\psi(M_i|x_o)$ }  \\  
            \midrule
            $\theta_1 \cdot t $ & $l_1$ & $\theta_1 \sim \mathcal{U}(-2,2) $ &0.68 (0.24)  \\ %
            $\theta_1 \cdot t $  &  $l_2$  & $\theta_1 \sim \mathcal{U}(-2,2) $   &0.69 (0.24)  \\ %
            $\theta_1 \cdot t^2 $  & $q_1$   & $\theta_1 \sim \mathcal{U}(-0.5,0.5) $  & 0.80 (0.24)  \\ %
            $ (\theta_1 + t)^2 $  &  $q_2$  & $\theta_1 \sim \mathcal{U}(-5,0) $   &0.98 (0.10)  \\ %
            $ \theta_1 \cdot t^3 $  &  $cub$  & $\theta_1 \sim \mathcal{U}(-0.1,0.1) $   &0.88 (0.21)  \\ %
            $\theta_1 \cdot \sin(\theta_2 t ) $ & $sin$ &\makecell[l]{$\theta_1 \sim \mathcal{U}(0,5)$ \\$\theta_2 \sim \mathcal{U}(0.5,5) $}  & 0.86 (0.23)  \\ %
            $\theta_1 \cdot \cos(\theta_2 t ) $ & $cos$ &\makecell[l]{$\theta_1 \sim \mathcal{U}(0,5)$ \\$\theta_2 \sim \mathcal{U}(0.5,5) $}  & 0.89 (0.21)  \\ %
            $ \theta_1$  &  $const_1$  & $\theta_1 \sim \mathcal{U}(-5,5) $   &0.72 (0.25)  \\ %
            $ \theta_1$  &  $const_2$  & $\theta_1 \sim \mathcal{U}(0,10) $   &0.80 (0.25)  \\ %
            noise$_1$: $n_{t_i} \sim \mathcal{N}(0,\theta_1)$ & $n_1$ & $\theta_1 \sim \mathcal{U}(0.1,2) $ & 1.00 (0.02)  \\ %
            noise$_2$: $n_{t_i} \sim (t_i+1) \mathcal{N}(0,\theta_1)$& $n_2$ & $\theta_1 \sim \mathcal{U}(0.5,2) $ & 1.00 (0.02)  \\ %
            \bottomrule
        \end{tabular}
    \label{tab:additive_modelBIG}
\end{table}

\paragraph{Prior}
To show the flexibility of the presented prior over model components, we defined a dynamically changing graph for the additive model. During a random walk, we increased the edge weights of the direct model paths to the end node with every sampled component and additionally decreased the weight between the linear components if one component is sampled by a factor of two. This favors simple models and disadvantages the co-occurrence of both linear components. This corresponds to rules $R^3_{xy}$, $R^3_{yx}$ and $R^4_{x\text{end}}$ in Appendix \ref{app:sec:model_prior}  with $c=0.5$ and $c=2$ respectively. The resulting empirical prior distribution is shown in grey in Figure \ref{fig:additive_model}b. 

The parameter priors for the model components are shown in Table \ref{tab:additive_model}.

\begin{table*}[h] 
\caption{\textbf{SBMI performance for the additive model for 500k training samples.} Comparison of SBMI and reference model posteriors in terms of Kullback-Leibler divergence (KL) and marginal performances. We calculated reference posteriors for 100 observations $x_o$ (see Table \ref{tab:additive_model} for performances of individual components). For the RMSE we used 1k observations $x_o$ and `Reference' corresponds to the RMSE between the observations $x_o$ and samples $x$ under the ground truth model and parameters. We report mean and standard deviation. }
\centering
\begin{tabular}{lccc}
\toprule
                 Measure &  Reference (Posterior) &    SBMI Posterior & Prior \\
\midrule
     KL &  -  & 0.28 (0.71) & 11.26 (1.88) \\ 
    Marginal Performance &  0.88 (0.15) &  0.86 (0.09) & 0.53 (0.12) \\
    RMSE  & 6.87 (6.05) & 7.05 (6.19) & 15.24  (7.95) \\
\bottomrule
\end{tabular}
\label{tab:additive_performance}
\end{table*}

\begin{table}[h]
\caption{ \textbf{Details for the additive model with 20 components.} The model prior is a fully connected graph for the additive components.  The noise components are mutually exclusive as in the smaller models. Overall the model has 28 parameters.}
      \centering
        \begin{tabular}{lcll}
            \toprule
            Model Component     &   Token    & Parameter Prior  \\  
            \midrule
            $\theta_1 \cdot t $ & $l_1$ & $\theta_1 \sim \mathcal{U}(-2,2) $ \\ 
            $\theta_1 \cdot t $  &  $l_2$  & $\theta_1 \sim \mathcal{U}(-2,2) $   \\ %
            $\theta_1 \cdot t^2 $  & $q_1$   & $\theta_1 \sim \mathcal{U}(-0.5,0.5) $  \\ %
            $ (\theta_1 + t)^2 $  &  $q_2$  & $\theta_1 \sim \mathcal{U}(-5,0) $   \\ %
            $ \theta_1 \cdot t^3 $  &  $cub$  & $\theta_1 \sim \mathcal{U}(-0.1,0.1) $   \\ %
            $\theta_1 \cdot \sin(\theta_2 t ) $ & $sin$ &\makecell[l]{$\theta_1 \sim \mathcal{U}(0,5)$ \\$\theta_2 \sim \mathcal{U}(0.5,5) $}  \\
            $\theta_1 \cdot \cos(\theta_2 t ) $ & $cos$ &\makecell[l]{$\theta_1 \sim \mathcal{U}(0,5)$ \\$\theta_2 \sim \mathcal{U}(0.5,5) $} \\
            $ \theta_1$  &  $const_1$  & $\theta_1 \sim \mathcal{U}(-5,5) $   \\
            $ \theta_1$  &  $const_2$  & $\theta_1 \sim \mathcal{U}(0,10) $  \\
            $ \theta_1\cdot \tanh(t-\theta_2)$  &  $tanh_1$  &\makecell[l]{$\theta_1 \sim \mathcal{U}(1,10)$ \\$\theta_2 \sim \mathcal{U}(2,8) $}  \\
            $ \theta_1\cdot \tanh(\theta_2-t)$  &  $tanh_2$  &\makecell[l]{$\theta_1 \sim \mathcal{U}(1,10)$ \\$\theta_2 \sim \mathcal{U}(2,8) $}  \\
            $ \theta_1\cdot \exp(-(t-\theta_2)^2)$  &  $g_1$  &\makecell[l]{$\theta_1 \sim \mathcal{U}(1,10)$ \\$\theta_2 \sim \mathcal{U}(2,8) $}  \\
            $ \theta_1\cdot \exp(-(t-\theta_2)^2/8$  &  $g_2$  &\makecell[l]{$\theta_1 \sim \mathcal{U}(1,10)$ \\$\theta_2 \sim \mathcal{U}(2,8) $}  \\
            $ \theta_1\cdot \text{ReLU}(t-\theta_2)$  &  $relu_1$  &\makecell[l]{$\theta_1 \sim \mathcal{U}(1,5)$ \\$\theta_2 \sim \mathcal{U}(2,8) $}  \\
            $ \theta_1\cdot \text{ReLU}(\theta_2-t)$  &  $relu_2$  &\makecell[l]{$\theta_1 \sim \mathcal{U}(1,5)$ \\$\theta_2 \sim \mathcal{U}(2,8) $} \\
            noise$_1$: $n_{t_i} \sim \mathcal{N}(0,\theta_1)$ & $n_1$ & $\theta_1 \sim \mathcal{U}(0.1,2) $ \\ %
            noise$_2$: $n_{t_i} \sim (t_i+1) \mathcal{N}(0,\theta_1)$& $n_2$ & $\theta_1 \sim \mathcal{U}(0.5,2) $ \\
            noise$_3$: $n_{t_i} \sim (11-t_i) \mathcal{N}(0,\theta_1)$& $n_3$ & $\theta_1 \sim \mathcal{U}(0.5,2) $ \\
            noise$_4$: $n_{t_i} \sim (t_i^2+1) \mathcal{N}(0,\theta_1)$& $n_4$ & $\theta_1 \sim \mathcal{U}(0.2,1) $ \\
            noise$_5$: $n_{t_i} \sim (11-t_i^2) \mathcal{N}(0,\theta_1)$& $n_5$ & $\theta_1 \sim \mathcal{U}(0.2,1) $ \\
        \bottomrule
        \end{tabular}
    \label{tab:bigger_model}
\end{table}

\paragraph{Network Details}
We used a one-dimensional convolutional network followed by fully connected layers as an embedding net for the additive model. 
The convolutional layers used a kernel size of five and stride one. The output of the last convolutional layer was flattened before passing it on to the fully connected network. All further parameters can be found in Table \ref{tab:additive_network_details} and \ref{tab:additive_network_detailsBIG}.

\begin{table}[h]\caption{\textbf{Network details for the additive model.} Square brackets indicate the layer-wise parameters, otherwise the same parameters were used for all layers. }
      \centering
        \begin{tabular}{lcccc}
            \toprule
            & Number of Layers & \makecell[c]{Dimensions / \\ \#Channels} & Components\\
            \midrule
            Convolutional layers & 2 & [10, 16] & - \\
            Fully connected layers &3 & [200, 200, 50] & -  \\
            MoGr net & 3& 80 & 3\\
            MDN net & 3 &  120 & 3 \\
            \bottomrule
        \end{tabular}
        \label{tab:additive_network_details}
\end{table}
\renewcommand\cellalign{ll}

\begin{table}[h]\caption{\textbf{Network details for the large additive model.} Square brackets indicate the layer-wise parameters, otherwise the same parameters were used for all layers. }
      \centering
        \begin{tabular}{lcccc}
            \toprule
            & Number of Layers & \makecell[c]{Dimensions / \\ \#Channels} & Components\\
            \midrule
            Convolutional layers & 2 & [10, 16] & - \\
            Fully connected layers &3 & [200, 200, 50] & -  \\
            MoGr net & 3& 120 & 3\\
            MDN net & 3 &  200 & 3 \\
            \bottomrule
        \end{tabular}
        \label{tab:additive_network_detailsBIG}
\end{table}
\renewcommand\cellalign{ll}

\subsection{Bayes Factors for the shown Example}\label{app:Bayes_factor_additive} For the example shown in Fig. \ref{fig:additive_model} we get Bayes factors of $B_{l_1l_2}=1.02 $ for the comparison of the two models with a single linear component (either $l_1$ or $l_2$), and $B_{l_1l_{12}}=1.45$,
if we compare the model with component $l_1$ with the one in which both model components are present ($l_{12}$). Following the scale by \cite{jeffreys1998theory} this would be `inconclusive' about the preference of the models.

\FloatBarrier
\subsection{DDM} \label{app:model_details_ddm}

Drift diffusion models can be described as a stochastic differential equation for a decision variable $z$:
\begin{equation*}
 \text{d}z = d(z,t)\text{d}t + \text{d}W ,
\end{equation*}
with initial condition $z_0$, drift term $d$, and a Wiener noise process $W$. A decision is taken when the decision variable hits the boundary $|d(z,t)|\geq b(t)$ (Figure  \ref{fig:ddm}a). An additional parameter delays the starting time of the process (`non-decision time').  

We included two different drift terms $d$:
\begin{enumerate}
    \item constant drift: $d(z,t) = \theta_1$, \quad and
    \item  leaky drift: $d(z,t) = \theta_1 + \theta_2 \cdot z  $ \quad (with $\theta_2<0$), 
\end{enumerate}
and two boundary conditions $b$:
\begin{enumerate}
    \item constant boundary: $b(t) = \theta_1$, \quad and
    \item exponentially collapsing boundary: $b(t) = \theta_1 - \text{exp}(-t /\theta_2) $. 
\end{enumerate}

The initial condition $z_0$ was fixed to be zero and the noise term had a constant standard deviation of one. 
The non-decision time was a free parameter but was present in all models (see Figure \ref{fig:ddm}b).

\begin{table}[h]\caption{\textbf{Details for the DDM.} We used independent uniform distributions $\mathcal{U}$ for all parameter priors. The performances were calculated on 1k samples from the prior distribution, and we report mean and standard deviation. }
      \label{tab:ddm_details}
      \centering
        \begin{tabular}{lclcc}
            \toprule
            Model Component & Token    & Parameter Prior   & \makecell[c]{Performance \\  Model Posterior}& \makecell[c]{Performance \\  Model MAP}       \\
            \midrule
            constant drift & $d_c$ & $\theta_1 \sim \mathcal{U}(0,5) $   &0.85 (0.23)&  0.90 (0.31)  \\
            leaky drift  & $d_l$   & \makecell[l]{$\theta_1 \sim \mathcal{U}(0,5)$ \\$\theta_2 \sim \mathcal{U}(-20,-5) $}  &0.85 (0.23) & 0.90 (0.31)\\
            constant boundary  & $b_c$   & $\theta_1 \sim \mathcal{U}(0.3,2) $    &0.90 (0.20) &  0.92 (0.27) \\
            exp. collapsing boundary & $b_{exp}$ & \makecell[l]{$\theta_1 \sim \mathcal{U}(0.3,2)$ \\$\theta_2 \sim \mathcal{U}(0.5,1.5) $} &0.90 (0.20) & 0.92 (0.27)\\
            non-decision time & $ndt$ & $\theta_1 \sim \mathcal{U}(0.1,0.3)$ & 1.00 (0.00) & 1.00 (0.00)\\
            \bottomrule
        \end{tabular}
\end{table}

\paragraph{Training Data}
We used the \emph{pyDDM} toolbox \citep{shinn2020pyddm} to solve the DDM numerically for every $\theta$ using the Fokker-Planck equation. From the approximated decision time and choice distribution we then sampled 400 iid trials for each $\theta$. This results in a $400 \times 2$ data matrix with the recorded continuous decision times and binary decisions.

As training data, we sampled 200k models from the prior, solved these DDMs and drew 400 trials. 
From this data, we excluded datapoints with more than 300 undecided trials (defined as trials with a decision time larger than 10 seconds). From the remaining $\approx$180k datapoints we hold back 1k test datapoints and divided the other part into 10\% validation and 90\% training data.

\paragraph{Prior}
Initial experiments showed that models with leaky drift and constant boundary conditions often resulted in unrealistically long decision times (>10sec), and we therefore discouraged their co-occurrence by including a negative coupling between these two terms in the model prior.

All edges of the model prior have initially the same weight in the shown prior graph (Figure \ref{fig:ddm}b). If the leaky drift component is visited in a random walk, the edge weight of the constant boundary condition is decreased by a factor of two (corresponding to $R^2_{xy}$).  

The parameter priors for the different model components are shown in Table \ref{tab:ddm_details}

\paragraph{Network Details}
To account for the iid trial structure of the DDM data, we used a permutation invariant embedding net \cite{chan2018likelihood,radev2020bayesflow}.
 In this setup, the single-trial data is first processed by a fully connected network, mean pooled, and then passed through additional fully connected layers.

Each trial (represented as a vector (decision time, decision)) is first processed by the `single trial net', which we implemented as a fully connected neural network. The output is then averaged (making it permutation invariant) and passed on to a second fully connected network. 
The used hyperparameters (Table \ref{tab:ddm_network_details}) were the best hyperparameters in a coarse hyperparameter sweep over eight models, varying three hyperparameters.
To this end, we used \emph{Optuna} \cite{optuna_2019} and varied the embedding dimensions (last layer of the single trial net and the last layer of the fully connected embedding net) and the dimension of the MoGr net. 

\begin{table}[h]\caption{\textbf{Network details for the DDM.} Square brackets indicate the layer-wise parameters, otherwise the same parameters were used for all layers. }
      \centering
        \begin{tabular}{lcccc}
            \toprule
            & Number of Layers & Dimensions & Components \\
            \midrule
             Single trial net & 3 & [120, 120, 100]& - \\
             Fully connected embedding net  &3& [120, 120, 30] & - &\\
            MoGr net & 3& 80 & 3 \\
            MDN net &  3 & 120   & 3  \\
            \bottomrule
        \end{tabular}
        \label{tab:ddm_network_details}
\end{table}

\paragraph{Dataset}
The used data \cite{roitman2002data} was collected from two monkeys performing a random dot motion discrimination task. Visual stimuli of moving dots with different coherence rates (0, 3.2, 6.4 and 12.8\%) were presented and the monkeys had to decide on the moving direction. 
We randomly subsampled 400 trials for each stimulus condition to match the dimension of our training data and show the results for `monkey N' throughout the manuscript. The dataset can be found here: \url{https://shadlenlab.columbia.edu/resources/RoitmanDataCode.html}.

\begin{table}[ht]\caption{\textbf{DDM parameter comparison for example observation.} 
Sample mean and standard deviation for 10k samples from the SBMI parameter posterior for the example observation from Figure \ref{fig:ddm}. The model posteriors are  $q_{\psi}(\text{gt-model}|x_o)=0.75$ and $q_{\psi}(\text{c.-drift-model}|x_o)=0.25$. }
      \label{tab:ddm_example_obs}
      \centering
        \begin{tabular}{lcccc}
            \toprule
            Model Component & Parameter & \makecell[c]{Ground\\ Truth}   &  \makecell[c]{SBMI posterior \\  | gt-model}      &\makecell[c]{SBMI posterior \\  | c.-drift-model}\\
            \midrule
            constant drift  & $\theta_1$ & - & - &  1.37 (0.08)   \\
            leaky drift     & \makecell[c]{$\theta_1 $ \\$\theta_2 $} &\makecell[c]{ 2.00 \\ -10.00} &\makecell[c]{ 1.79 (0.17) \\ -9.71 (3.60)} & \makecell[c]{ - \\ - }\\
            constant boundary    & $\theta_1$  & -  & -  &  - \\
            exp. collapsing boundary  & \makecell[c]{$\theta_1 $ \\$\theta_2$} &\makecell[c]{ 0.70 \\ 0.70} & \makecell[c]{ 0.75 (0.13) \\ 0.76 (0.11)} & \makecell[c]{ 1.73 (0.15) \\ 1.07 (0.11)} \\
            non-decision time & $\theta_1 $ & 0.25 & 0.22 (0.03) & 0.14 (0.02)\\
            \bottomrule
        \end{tabular}
\end{table}

\begin{table} [h]
\caption{\textbf{DDM predictive performance for experimental data.} Comparison of mean decision times $\mu$ and standard deviation of decision times $\sigma$ of the experimental data ($\hat{\cdot}$) to posterior predictive samples. We report the mean and standard deviation for the different measures based on 10k SBMI posterior samples and for ten \emph{pyDDM} fits with different random seeds. The statistics are pooled over the different coherence rates. }
\centering
\begin{tabular}{lll}
\toprule
                 Measure &   \emph{pyDDM} &    SBMI \\
\midrule
    decision time: $| \mu - \hat \mu|$ & 0.06 (0.06) & 0.06 (0.06)  \\  %
    decision time: $| \sigma - \hat \sigma|$ & 0.17 (0.15) & 0.13 (0.15) \\ %
    deviation correct trials in \%  & 2.08 (1.75) &  2.22 (1.83) \\
    MSE on densities ($\cdot 10^{-2}$) & 9.66 (9.18) & 9.66 (8.94) \\ 
\bottomrule
\end{tabular}
\label{tab:ddm_experimental}
\end{table}

\begin{table} [h]
\caption{\textbf{DDM predictive performance for experimental data for individual coherence rates.} We report the mean and standard deviation for the MSE based on 10k SBMI posterior samples and for ten \emph{pyDDM} fits with different random seeds. See Table S1 (main paper) for the pooled statistics.}
\centering
\begin{tabular}{llll}
\toprule
                 Measure &  Coherence \% & \emph{pyDDM} &    SBMI \\
\midrule
    MSE on densities ($\cdot 10^{-2}$)&0.0 & 23.80 (0.004) & 23.79 (0.010) \\ 
 & 3.2  & 10.66 (0.002) & 10.67 (0.009)  \\
     & 6.4  & 3.64 (0.001) & 3.64 (0.008)  \\
     & 12.8  & 0.55 (0.001) & 0.56 (0.011)  \\
\bottomrule
\end{tabular}
\label{tab:ddm_experimental_detailed}
\end{table}

\FloatBarrier

\subsection{Hodgkin-Huxley model} \label{app:HH_details}

We implemented a version of the Hodgkin-Huxley model based on \cite{pospischil2008minimal} in \emph{JAX} \cite{jax2018github}.

The differential equations are given by
\begin{align*}
\frac{dV}{dt} = &g_L(E_L-V)+
                    {g}_{Na} m^3h(E_{Na}-V) 
  +{{g}_{K}}n^4(E_K-V)+ {g}_M p(E_K-V) + g_{Ca} q^2 r ( E_{Ca} - V ) \\
    &+ I_{inj} +\sigma\eta(t),
\end{align*}
and 
\begin{equation*}
   \frac{ds}{dt} =\frac{s_\infty\left(V\right)-s}{\tau_s\left(V\right)};  s\in\{m,h,n,p,q,r\}, 
\end{equation*}
where $I_{inj}$ corresponds to the injected current to stimulate the cell.
The parameters $V_t$ for $K$ and $\tau$ for $K_m$ define the steady state $s_\infty$ and the time constant $\tau_s$ for the corresponding gating parameters. All details can be found in \cite{pospischil2008minimal}.

For the noise term $\sigma \eta(t)$ we sampled for each time step independent noise and scaled it corresponding to the standard deviation $\theta_1 = \sigma$. We used a step current $I_{inj}$ of $2 \mu A / cm^2$ for $1000ms$ and run the simulation for $1450 ms$.  This stimulus and recording protocol corresponds to the voltage recordings from the Allen Cell database.  

\paragraph{Prior}
The graph for the model prior is shown in Fig. \ref{fig:app:HHprior}, and all edge weights were set to one, except the edge from $K$ to $Na$ which was set to $1/3$.
The parameter priors for the different model components are shown in Table \ref{tab:hh_model} and adapted from \cite{gonccalves2020training}.

\paragraph{Training data}
We sampled and simulated 100k models. We excluded simulations with \emph{Nan} and \emph{inf} values, resulting in a training dataset of 99,895 simulations. 
We used these simulations to calculate 24 summary statistics and imputed \emph{Nan} values either by the appropriate \emph{max}, \emph{min} or \emph{mean} value.  
The used summary statistics were adapted from previously used summary statistics in \cite{scala2021phenotypic}.

\paragraph{Network details}
The network details can be found in Table \ref{tab:HH_network_details}.

\begin{table}[h]
\caption{ \textbf{Details for the Hodgkin-Huxley model.} The parameter $\theta_1$ in the noise terms defines the standard deviation of a normal distribution $\mathcal{N}$, and $\mathcal{U}(a,b)$ defines a uniform distribution on the interval $[a,b]$. For the performance we report the mean and standard deviation over all test samples $x_o$ as well as over test samples which have at least one spike $x_o^s$.}
          \label{tab:hh_model}
      \centering
        \begin{tabular}{lclccc}
            \toprule
            Model Component     &   Token    & Parameter Prior  & \makecell[c]{Performance (all) \\ $q_\psi(M_i|x_o)$}& \makecell[c]{Performance (spikes) \\  $q_\psi(M_i|x_o^s)$}    \\
            \midrule
            Leak current& $I_L$ & \makecell{$g_L \sim \mathcal{U}(10^{-6},3\cdot 10^{-4}) $ \\ $E_L\sim \mathcal{U}(-80, -60) $} &1.00 (0.00) & 1.00  (0.00) \\ %
            Potassium channel  &  $K$  & \makecell{$g_K \sim \mathcal{U}(1.5\cdot 10^{-3},1.5\cdot 10^{-2}) $ \\ $V_t \sim \mathcal{U}(-70,-50) $ }  &1.00 (0.00) & 1.00 (0.00) \\ %
            M-type potassium channel  & $K_m$   & \makecell{$g_M \sim \mathcal{U}(10^{-5},6\cdot 10^{-4}) $ \\ $\tau \sim \mathcal{U}(200,2000)  $ }  & 0.96 (0.12) & 0.98 (0.01) \\ %
            Sodium channel& $Na$ & $g_{Na} \sim \mathcal{U}(8\cdot 10^{-3},8\cdot 10^{-2}) $ &1.00 (0.00) & 1.00  (0.00) \\ %
            Calcium channel & $Ca$ & $g_{Ca} \sim \mathcal{U}(5\cdot 10^{-5}, 10^{-3}) $ & 0.74 (0.25) & 0.94 (0.18) \\ %
            Noise & $Noise$ & $\theta_1 \sim \mathcal{U}(10^{-4},1.5\cdot 10^{-1}) $ & 1.00 (0.00) & 1.00 (0.00) \\ %

            \bottomrule
        \end{tabular}
    \label{tab:HH_model}
\end{table}

\begin{table}[h]\caption{\textbf{Network details for the Hodgkin-Huxley model.} Note that we replaced the embedding net by summary statistics in this case. }
      \centering
        \begin{tabular}{lcccc}
            \toprule
            & Number of Layers & Dimensions & Components \\
            \midrule
            MoGr net & 3& 80 & 3 \\
            MDN net &  3 & 150   & 3  \\
            \bottomrule
        \end{tabular}
        \label{tab:HH_network_details}
\end{table}

\clearpage
\section{Supplementary Figures}
\vspace{4cm}
\begin{figure}[h]
    \includegraphics[width=1\textwidth]{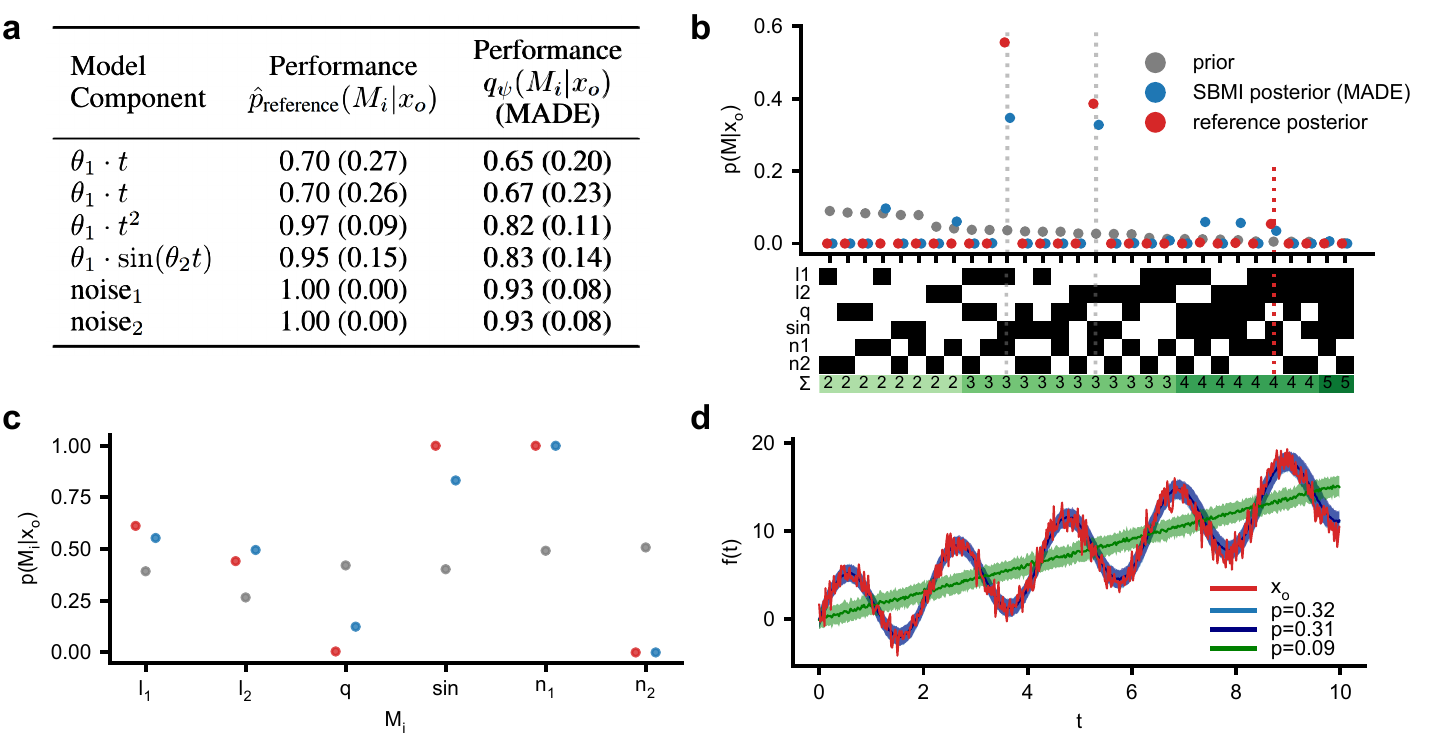}
    \caption{\textbf{SBMI on the Additive Model} (using MADE instead of MoGr). \textbf{(a)} Table with marginal posterior performance for the different model components of the additive model. Initial experiments showed a worse performance of the posterior implemented as MADE compared to a MoGr  (Table \ref{tab:additive_model}).
    \textbf{(b)} Model posterior implemented as MADE conditioned on $x_o$ shown in (d) (similar to Fig. \ref{fig:additive_model}b).
    \textbf{(c)} Marginal model posterior implemented as MADE for the same observation.
    \textbf{(d)} Posterior predictives (and local uncertainties as mean $\pm$ std.) of the three most likely models, covering 72\% of the model posterior mass. Compared to Fig. \ref{fig:additive_model}d models without the sinusoidal get non-negligible posterior mass. 
    }
        \label{app:fig:made_results}
\end{figure}

\begin{figure}[h] 
  \centering
  \includegraphics{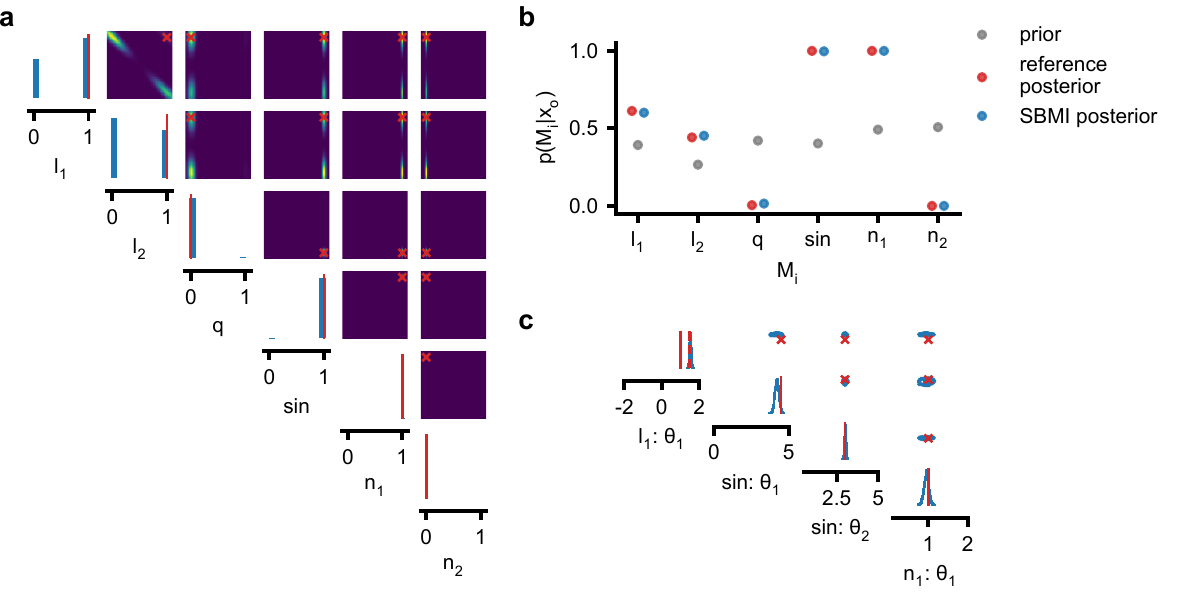}
  \caption{
    \textbf{SBMI posterior for the additive model. (a)}  Smoothed one- and two-dimensional marginal distribution for the (binary) model posterior. The ground truth model is indicated in red. 
    \textbf{(b)} Marginal model posterior distribution for the observation $x_o$ shown in Figure \ref{fig:additive_model}. The ground truth model consists of the components $l_1,~l_2, ~\sin$, and $n_1$.
    \textbf{(c)} One- and two-dimensional marginal distribution of the SBMI parameter posterior given the MAP model. The ground truth parameters are indicated in red. The sum of the coefficients $\theta_1$ for the two linear components $l_1$ and $l_2$ of the ground truth model is indicated as a dashed line in the most left plot. It matches the mean of the posterior marginal for $l_1$.
    }
    \label{fig:app:additive}
\end{figure}

\begin{figure}[h]
  \centering
  \includegraphics{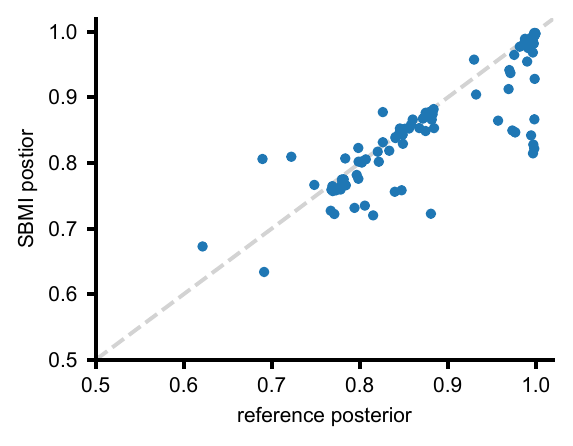}
  \caption{
    \textbf{Model posterior for the additive model.} Mean marginal performance of the reference posterior vs the SBMI posterior for 100 examples of the additive model with a correlation of 0.85. Note that uncertainty in the model posterior is reflected by lower `performance' values which is not necessarily a bad sign as it might reflect the true underlying uncertainty. 
    The high correlation indicates that the uncertainty of both posteriors is similar for most examples, which validates our SBMI approach. 
    }
    \label{fig:app:additive_correlation_mmp}
\end{figure}

\begin{figure}[h]
  \centering
  \includegraphics[]{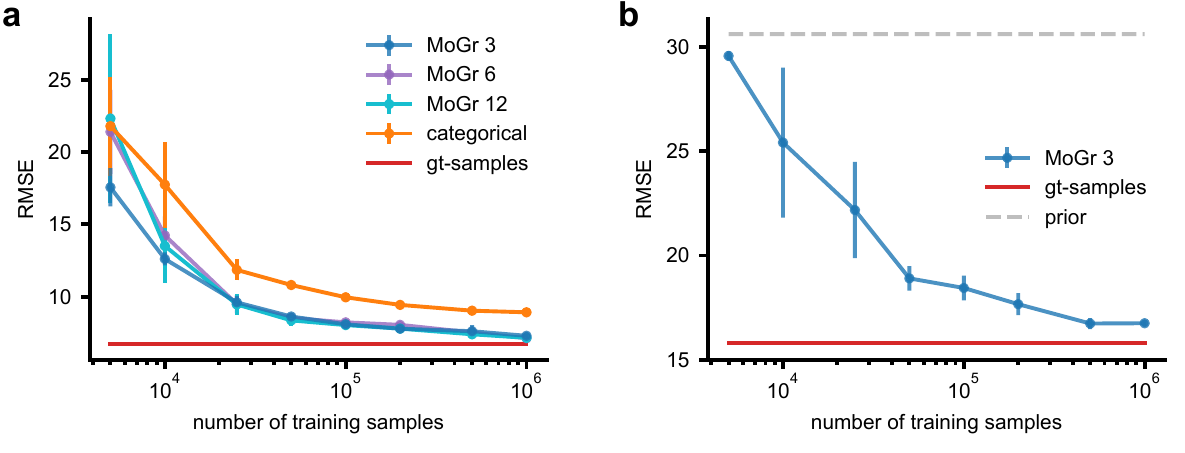}
  \caption{
    \textbf{SBMI performance is robust to the number of mixture components and scales to larger models.} 
     \textbf{(a)} Posterior predictive performance for the large additive model with 11 components, and $2^{11}=2048$ possible combinations of model components in terms of RMSE. We compare different choices for the distribution-family of the model posterior (either a mixture of Grassmann distributions with varying number of components (3, 6 and 12) (MoGr) or a categorical distribution). At the same time we increase also the number of components for the mixture of Gaussian parametere posterior network. 
    \textbf{(b)} Posterior predictive performance for an additive model with 20 components (specified in Table \ref{tab:bigger_model}) and $2^{20}\approx 1M$ possible combinations of model components. A categorical distribution can not be fitted anymore as we have very few to no samples for each combination of model components in a dataset of size $1M$.
    }
    \label{fig:app:additive_BIG_comparison_cat}
\end{figure}

\begin{figure}[h]
  \centering
  \includegraphics{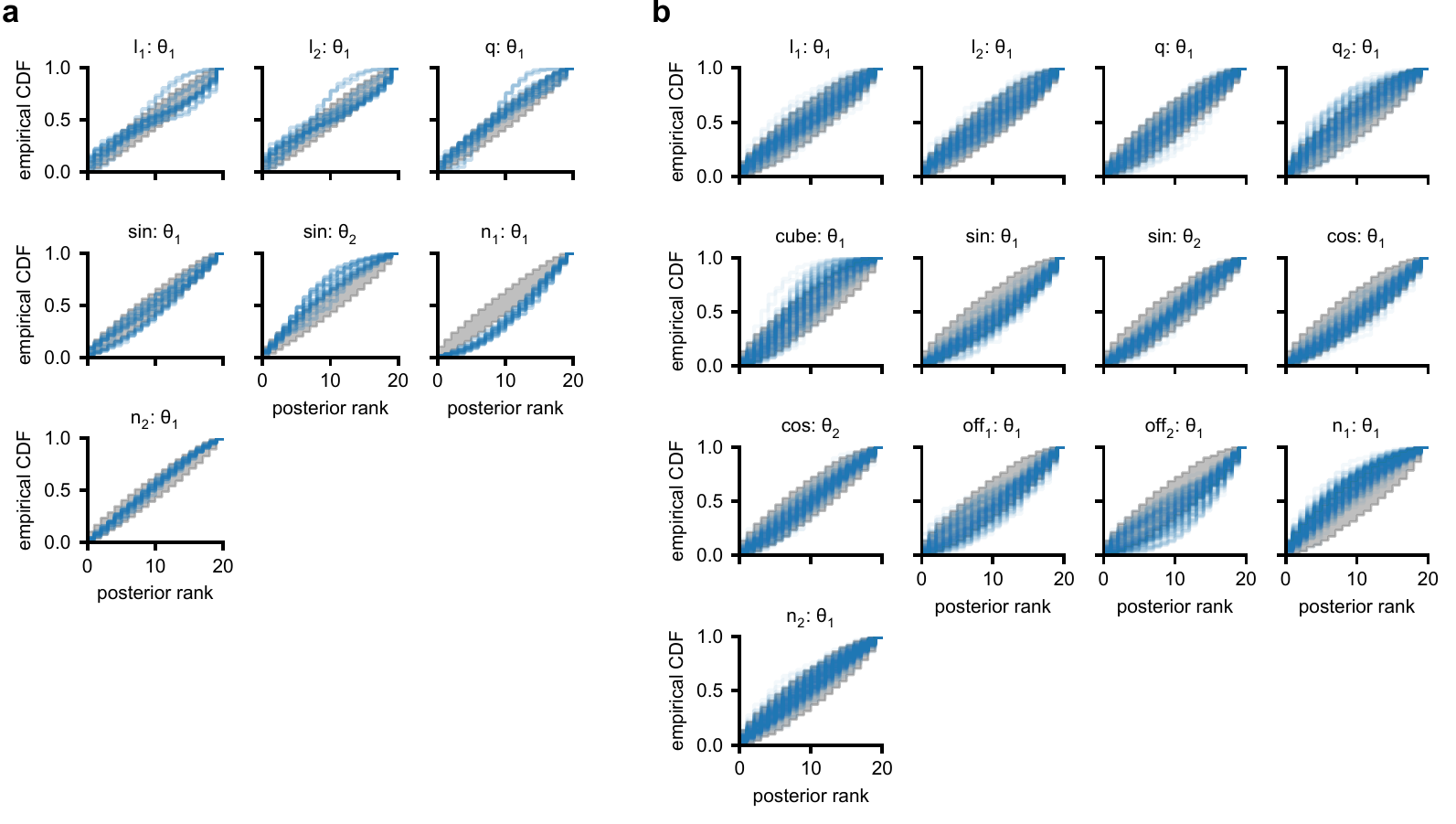}
  \caption{
    \textbf{Simulation-based calibration of the parameter posterior for the additive models.} 
    \textbf{(a)}  Posterior calibration of the \emph{small} additive model, by individual model parameters for all possible model component combinations. Grey regions indicate the 99\% confidence intervals of a uniform distribution, given the provided number of samples. 
    \textbf{(b)} Same as (a) for the \emph{large} additive model and for all models with at least 50 samples in the test dataset of 100k samples. 
    For all plots we ranked the true parameter $\theta_o$ against 1k posterior samples. 
    }
    \label{fig:app:SBC_additive}
\end{figure}

\begin{figure}[h]
  \centering
  \includegraphics{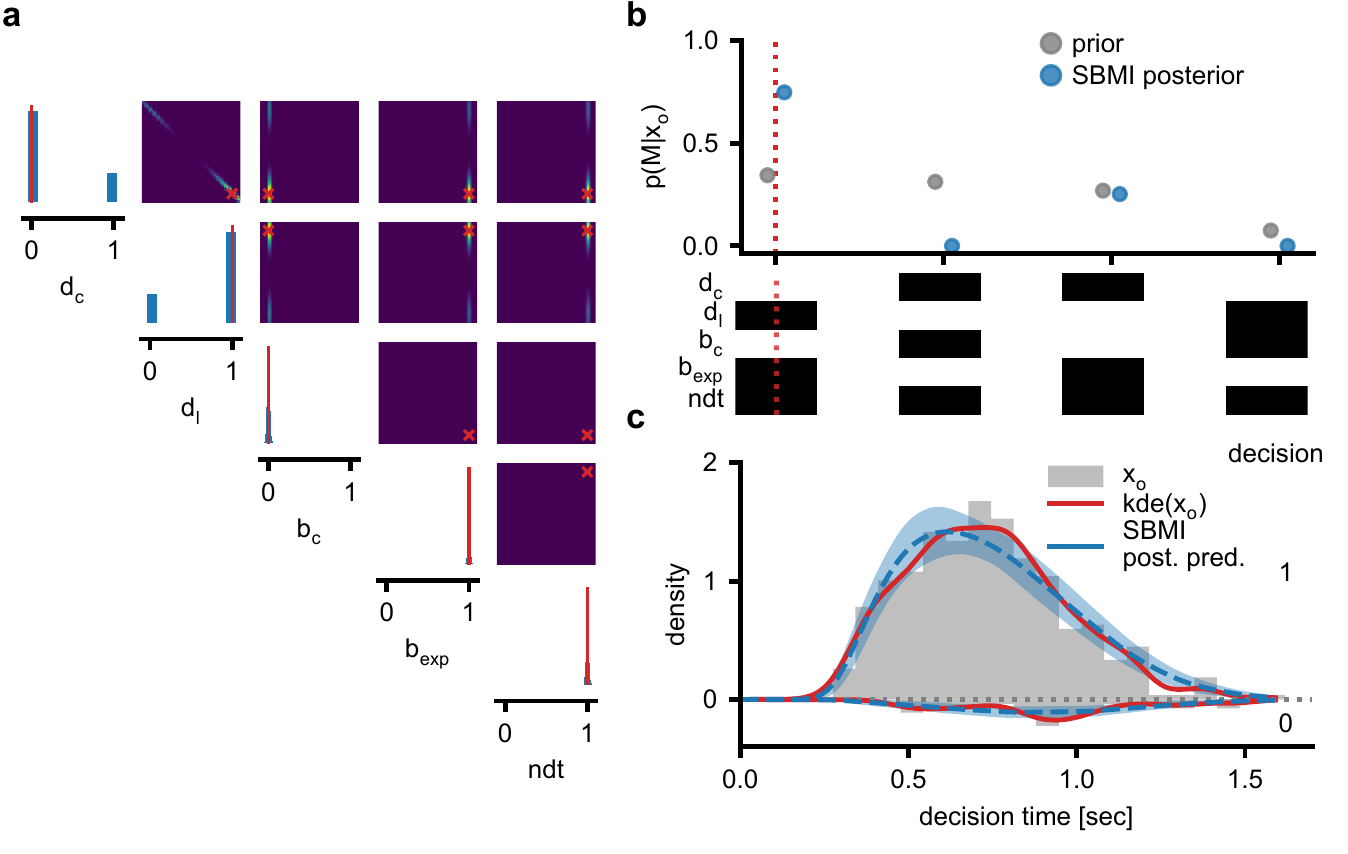}
  \caption{
   \textbf{DDM posterior and posterior predictives. (a)}  Smoothed one- and two-dimensional marginal distribution for the (binary) model posterior. The ground truth model is indicated in red. 
    \textbf{(b)} Model prior and SBMI model posterior with the ground truth model indicated as a red dotted line. 
    \textbf{(c)} Posterior predictives for the example observation of Figure \ref{fig:ddm}. The global uncertainty is shown as mean $\pm$ std. over predictions from different model posterior samples. 
    }
    \label{fig:app:ddm}
\end{figure}

\begin{figure}[h]
  \centering
  \includegraphics{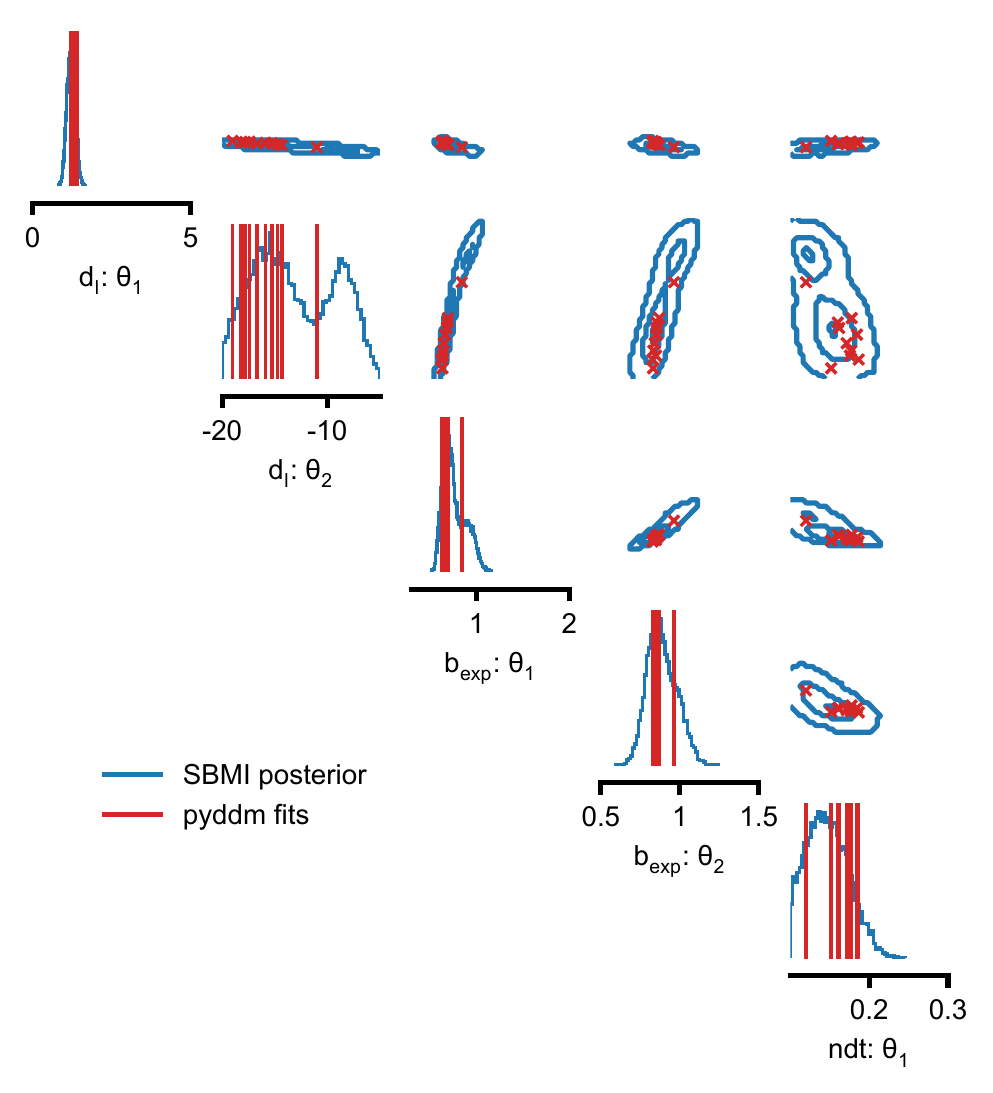}
  \caption{
    \textbf{Parameter posterior distribution for the DDM on experimental data.}   One- and two-dimensional marginals of the SBMI parameter posterior for a coherence rate of 6.4\%.  Red markers indicate ten \emph{pyDDM} fits for a fixed model with different random seeds, all resulting in similar loss values. 
    }
    \label{fig:app:ddm_2d_pyddm}
\end{figure}

\begin{figure}[h]
  \centering
  \includegraphics{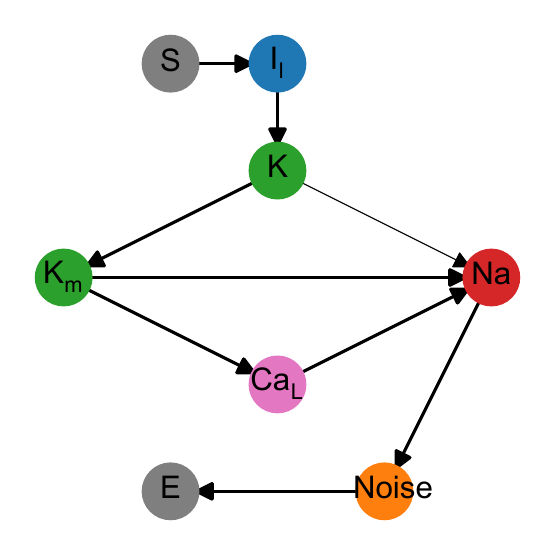}
  \caption{
    \textbf{Model prior for the Hodgkin-Huxley model.} Model prior graph. While potassium and sodium channels are always present in this definition, the presence of m-type potassium and calcium channels is inferred from the observation. This reflects our prior knowledge for spiking neurons in which sodium and potassium channels are essential for the spiking mechanism. The sampling distribution is shown in Fig. \ref{fig:app:HH_posterior}. 
    }
    \label{fig:app:HHprior}
\end{figure}

\begin{sidewaysfigure}
     \centering
    \includegraphics[width=0.95\textheight]{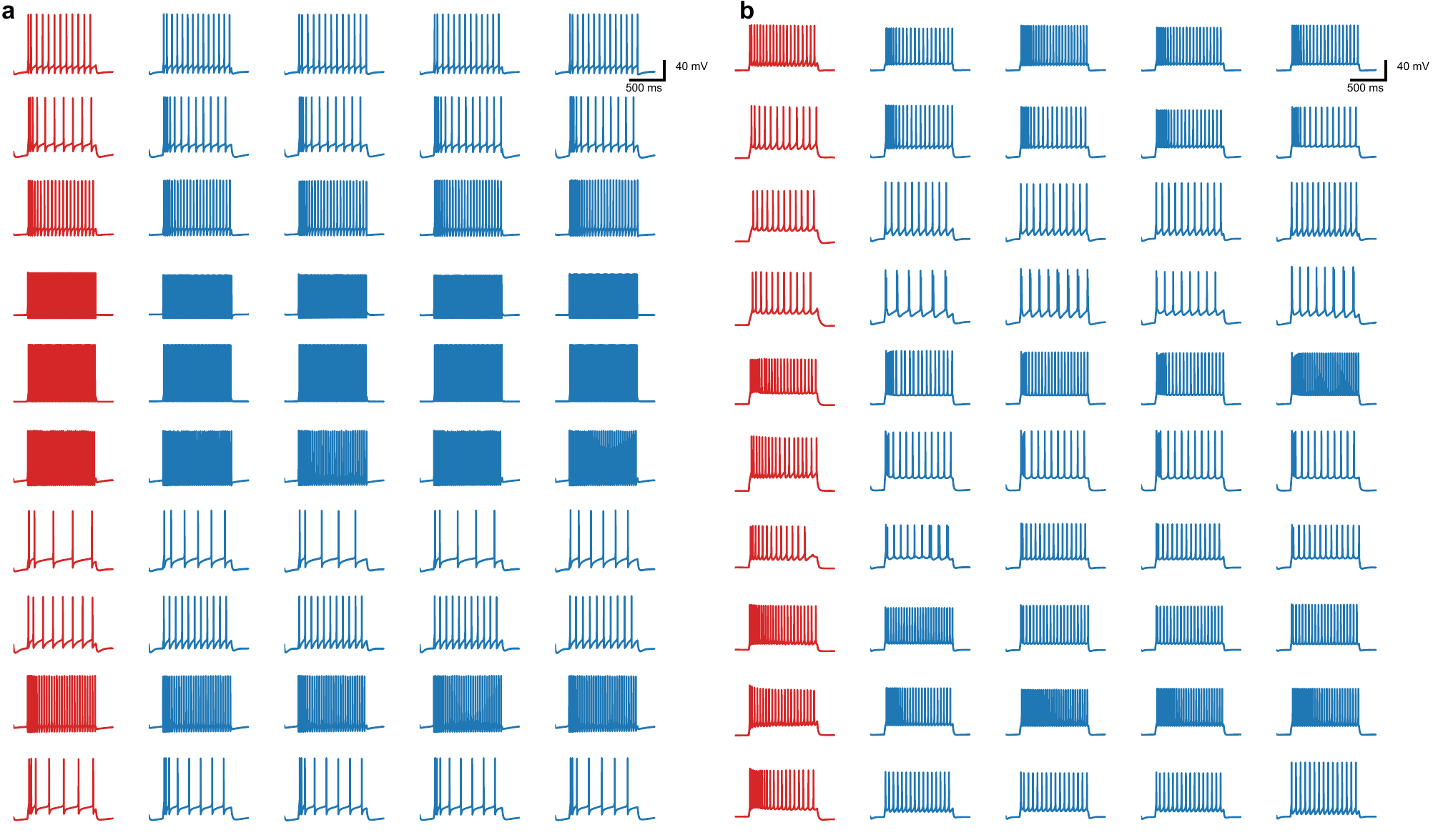}
        \caption{ 
            \textbf{Posterior predictives of the Hodgkin-Huxley model.} \textbf{(a)} Ten synthetic samples (red) with four posterior predictive samples each (blue).\\
            \textbf{(b)} Ten voltage recordings from the Allen Celltype database with four posterior predictive samples each. }
    \label{fig:app:HH_traces}
\end{sidewaysfigure}

\begin{figure}[h]
  \centering
  \includegraphics[]{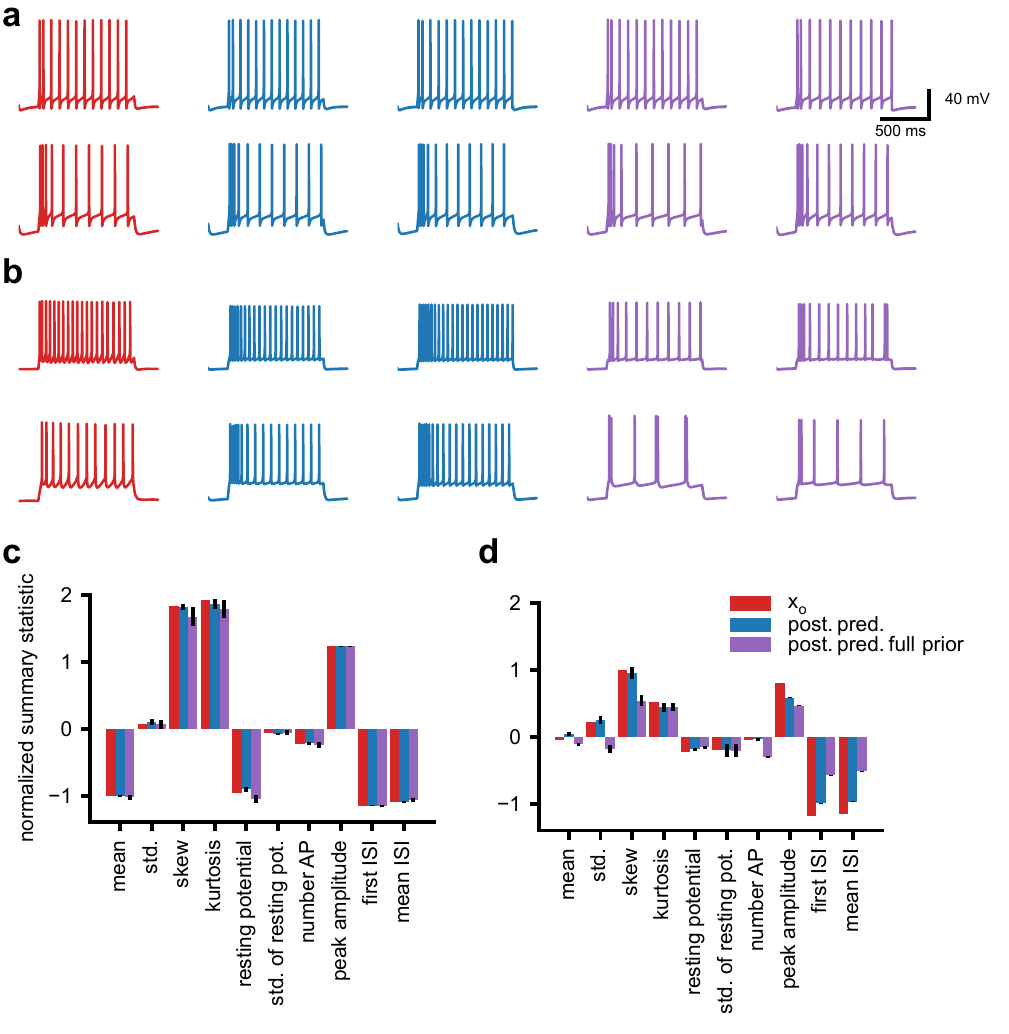}
  \caption{
    \textbf{Leveraging domain knowledge enhances posterior performance for the Hodgkin-Huxley model.} 
    \textbf{(a)} Two synthetic samples (red) with two posterior predictive samples each (blue) (same as Fig. \ref{fig:HH_main} a) with additional two posterior predictives from a model trained on a fully connected model prior (violet). 
    \textbf{(b)} Two voltage recordings from the Allen Cell database (red) with two posterior predictive samples each for the standard model (blue) (same as Fig. \ref{fig:HH_main} b)  with additional two posterior predictives from a model trained on a fully connected model prior (violet). 
     \textbf{(c)} Ten example summary statistics (out of 24) for the upper trace in (a) and summary statistics for ten posterior predictive samples from the respective model (mean$\pm$std.).
      \textbf{(d)} Ten example summary statistics for the voltage recording from the Allen dataset shown in (b), upper trace, and summary statistics for ten posterior predictive samples from the respective model (mean$\pm$std.). 
    }
    \label{fig:app:HH_susmstats}
\end{figure}

\begin{figure}[h]
  \centering
  \includegraphics{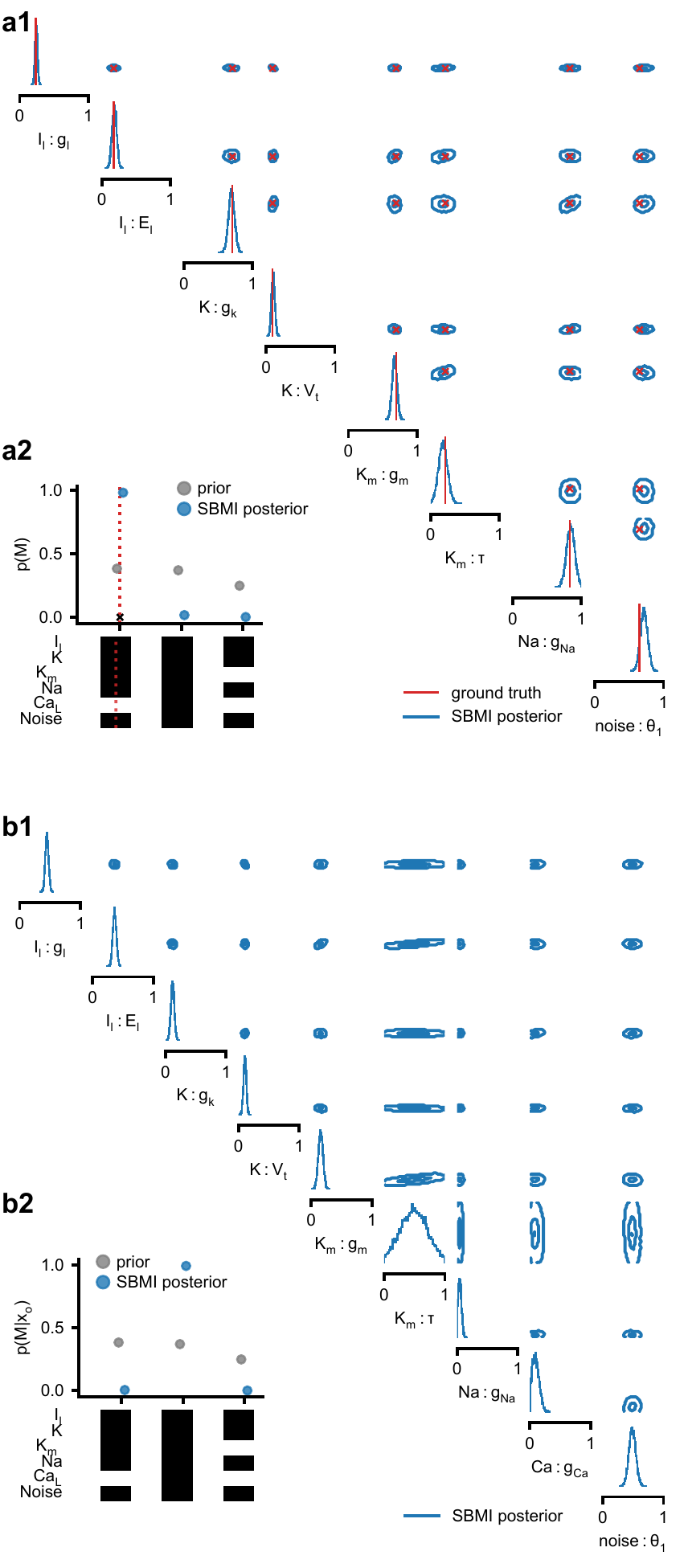}
  \caption{
\textbf{Model and parameter posteriors for the Hodgkin-Huxley model. (a1)} 
    Normalised one and two dimensional marginal distributions for the parameter posterior for the synthetic example shown in Fig. \ref{fig:HH_main} a (upper trace). Red dots/lines are indicating the ground truth parameter $\theta_o$. 
    \textbf{(a2)} The corresponding model posterior for the example in (a1). Red dotted line indicates the ground truth model.
    \textbf{(b1)} Normalised one and two dimensional marginal distributions for the parameter posterior for the voltage recording from the Allen database shown in Fig. \ref{fig:HH_main} b (upper trace).
    \textbf{(b2)} The corresponding model posterior for the example in (b1). 
    }
    \label{fig:app:HH_posterior}
\end{figure}

\begin{figure}[h]
  \centering
  \includegraphics[]{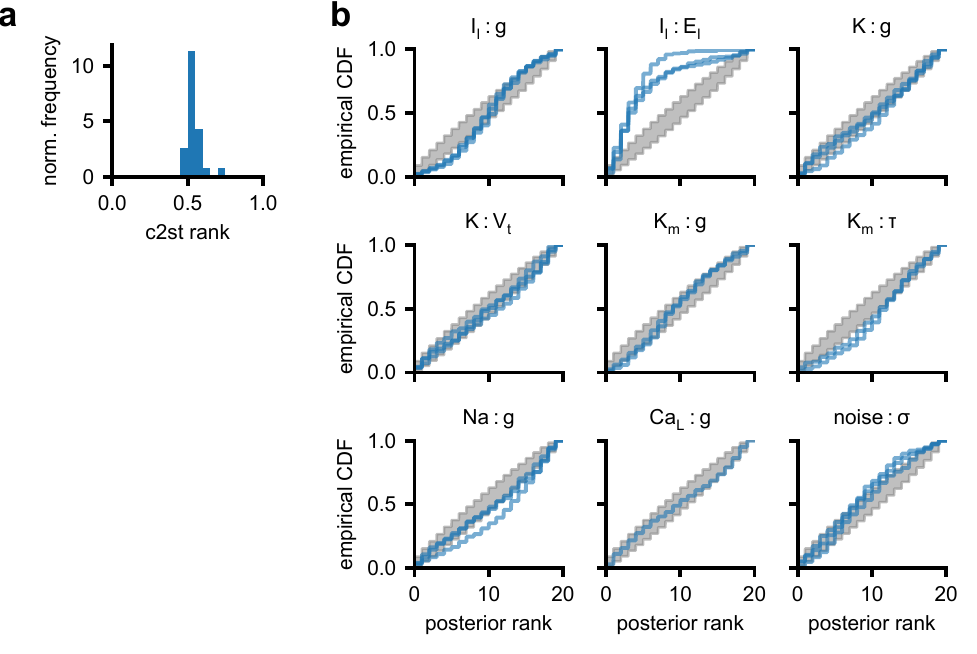}
  \caption{
    \textbf{Simulation-based calibration of the parameter posterior for the Hodgkin-Huxley model.} 
    \textbf{(a)} Histogram of the c2st ranks. A value of 0.5 indicates a well calibrated posterior for which the rank statistics are  indistinguishable from a uniform distribution. 
    \textbf{(b)} Posterior calibration by individual model parameters for all possible model component combinations. Grey regions indicate the 99\% confidence intervals of a uniform distribution, given the provided number of samples. 
    For all plots we ranked the true parameter $\theta_o$ against 1k posterior samples. 
    }
    \label{fig:app:HH_SBC}
\end{figure}

\end{document}